\documentclass[english]{elsarticle}
\usepackage[T1]{fontenc}
\usepackage[latin9]{inputenc}
\usepackage{amsmath}
\usepackage{amssymb}
\usepackage{graphicx}

\makeatletter
\usepackage{amsmath, amsthm}


\renewcommand\[{\begin{equation}}
\renewcommand\]{\end{equation}} 

\newtheorem{theorem}{Theorem}[section]

\newtheorem{lemma}[theorem]{Lemma}

\newtheorem{definition}[theorem]{Definition}



\newcommand{\interior}[1]{%
  {\kern0pt#1}^{\mathrm{o}}%
}

\makeatother

\usepackage{babel}
\begin{document}

\title{Piecewise convexity of artificial neural networks}

\author{Blaine Rister$_{a}$, Daniel L. Rubin$_{b}$}

\address{$_{a}$Stanford University, Department of Electrical Engineering.
email: blaine@stanford.edu. Corresponding author.\\
$_{b}$Stanford University, Department of Radiology (Biomedical Informatics
Research). email: dlrubin@stanford.edu.\\
1201 Welch Rd\\
Stanford, CA, USA 94305}
\begin{abstract}
Although artificial neural networks have shown great promise in applications
including computer vision and speech recognition, there remains considerable
practical and theoretical difficulty in optimizing their parameters.
The seemingly unreasonable success of gradient descent methods in
minimizing these non-convex functions remains poorly understood. In
this work we offer some theoretical guarantees for networks with piecewise
affine activation functions, which have in recent years become the
norm. We prove three main results. Firstly, that the network is piecewise
convex as a function of the input data. Secondly, that the network,
considered as a function of the parameters in a single layer, all
others held constant, is again piecewise convex. Finally, that the
network as a function of all its parameters is piecewise multi-convex,
a generalization of biconvexity. From here we characterize the local
minima and stationary points of the training objective, showing that
they minimize certain subsets of the parameter space. We then analyze
the performance of two optimization algorithms on multi-convex problems:
gradient descent, and a method which repeatedly solves a number of
convex sub-problems. We prove necessary convergence conditions for
the first algorithm and both necessary and sufficient conditions for
the second, after introducing regularization to the objective. Finally,
we remark on the remaining difficulty of the global optimization problem.
Under the squared error objective, we show that by varying the training
data, a single rectifier neuron admits local minima arbitrarily far
apart, both in objective value and parameter space.
\end{abstract}
\maketitle{}

\begin{keywords} convex analysis; gradient descent; optimization;
machine learning; neural networks; convergence

\end{keywords}

\section{Introduction}

Artificial neural networks are currently considered the state of the
art in applications ranging from image classification, to speech recognition
and even machine translation. However, little is understood about
the process by which they are trained for supervised learning tasks.
The problem of optimizing their parameters is an active area both
practical and theoretical research. Despite considerable sensitivity
to initialization and choice of hyperparameters, neural networks often
achieve compelling results after optimization by gradient descent
methods. Due to the nonconvexity and massive parameter space of these
functions, it is poorly understood how these sub-optimal methods have
proven so successful. Indeed, training a certain kind of neural network
is known to be NP-Complete, making it difficult to provide any worst-case
training guarantees \cite{Blum:1992:OCT:148433.148441}. Much recent
work has attempted to reconcile these differences between theory and
practice \cite{Kawaguchi:2016:WithoutPoorLocalMinima,Soudry:2016:NoBadLocalMinima}.

This article attempts a modest step towards understanding the dynamics
of the training procedure. We establish three main convexity results
for a certain class of neural network, which is the current the state
of the art. First, that the objective is piecewise convex as a function
of the input data, with parameters fixed, which corresponds to the
behavior at test time. Second, that the objective is again piecewise
convex as a function of the parameters of a single layer, with the
input data and all other parameters held constant. Third, that the
training objective function, for which all parameters are variable
but the input data is fixed, is piecewise multi-convex. That is, it
is a continuous function which can be represented by a finite number
of multi-convex functions, each active on a multi-convex parameter
set. This generalizes the notion of biconvexity found in the optimization
literature to piecewise functions and arbitrary index sets \cite{Gorski:2007:BiConvex}.
To prove these results, we require two main restrictions on the definition
of a neural network: that its layers are piecewise affine functions,
and that its objective function is convex and continuously differentiable.
Our definition includes many contemporary use cases, such as least
squares or logistic regression on a convolutional neural network with
rectified linear unit (ReLU) activation functions and either max-
or mean-pooling. In recent years these networks have mostly supplanted
the classic sigmoid type, except in the case of recurrent networks
\cite{Glorot:2011:ReluNetworks}. We make no assumptions about the
training data, so our results apply to the current state of the art
in many practical scenarios.

Piecewise multi-convexity allows us to characterize the extrema of
the training objective. As in the case of biconvex functions, stationary
points and local minima are guaranteed optimality on larger sets than
we would have for general smooth functions. Specifically, these points
are partial minima when restricted to the relevant piece. That is,
they are points for which no decrease can be made in the training
objective without simultaneously varying the parameters across multiple
layers, or crossing the boundary into a different piece of the function.
Unlike global minima, we show that partial minima are reliably found
by the optimization algorithms used in current practice.

Finally, we provide some guarantees for solving general multi-convex
optimization problems by various algorithms. First we analyze gradient
descent, proving necessary convergence conditions. We show that every
point to which gradient descent converges is a piecewise partial minimum,
excepting some boundary conditions. To prove stronger results, we
define a different optimization procedure breaking each parameter
update into a number of convex sub-problems. For this procedure, we
show both necessary and sufficient conditions for convergence to a
piecewise partial minimum. Interestingly, adding regularization to
the training objective is all that is needed to prove necessary conditions.
Similar results have been independently established for many kinds
of optimization problems, including bilinear and biconvex optimization,
and in machine learning the special case of linear autoencoders \cite{Wendell:1976:Bilinear,Gorski:2007:BiConvex,Baldi:2012:ComplexValuedAutoencoders}.
Our analysis extends existing results on alternating convex optimization
to the case of arbitrary index sets, and general multi-convex point
sets, which is needed for neural networks. We admit biconvex problems,
and therefore linear autoencoders, as a special case.

Despite these results, we find that it is difficult to pass from partial
to global optimality results. Unlike the encouraging case of linear
autoencoders, we show that a single rectifier neuron, under the squared
error objective, admits arbitrarily poor local minima. This suggests
that much work remains to be done in understanding how sub-optimal
methods can succeed with neural networks. Still, piecewise multi-convex
functions are in some senses easier to minimize than the general class
of smooth functions, for which none of our previous guarantees can
be made. We hope that our characterization of neural networks could
contribute to a better understanding of these important machine learning
systems.

\section{Preliminary material}

We begin with some preliminary definitions and basic results concerning
continuous piecewise functions.

\begin{definition}\label{def:piecewise_affine}

Let $g_{1},g_{2},...,g_{N}$ be continuous functions from $\mathbb{R}^{n}\rightarrow\mathbb{R}$.
A \textbf{continuous piecewise} function $f$ has a finite number
of closed, connected sets $S_{1},S_{2},...,S_{M}$ covering $\mathbb{R}^{n}$
such that for each $k$ we have $f(\boldsymbol{x})=g_{k}(\boldsymbol{x})$
for all $\boldsymbol{x}\in S_{k}$. The set $S_{k}$ is called a \textbf{piece
}of $f$, and the function $g_{k}$ is called \textbf{active} on $S_{k}$.

More specific definitions follow by restricting the functions $g$.
A \textbf{continuous piecewise affine }function has $g_{k}(\boldsymbol{x})=\boldsymbol{a}^{T}\boldsymbol{x}+b$
where $\boldsymbol{a}\in\mathbb{R}^{n}$ and $b\in\mathbb{R}$. A
\textbf{continuous piecewise convex} function has $g_{k}$ convex,
with $S_{k}$ convex as well.

\end{definition}

Note that this definition of piecewise convexity differs from that
found in the convex optimization literature, which focuses on \textit{convex}
piecewise convex functions, i.e.~maxima of convex functions \cite{Tsevendorj:2001:PiecewiseConvex}.
Note also that we do not claim a unique representation in terms of
active functions $g_{k}$ and pieces $S_{k}$, only that there exists
at least one such representation.

Before proceeding, we shall extend definition \ref{def:piecewise_affine}
to functions of multidimensional codomain for the affine case.

\begin{definition}\label{def:piecewise_affine_onto_Rn}

A function $f:\mathbb{R}^{m}\rightarrow\mathbb{R}^{n}$, and let $f_{k}:\mathbb{R}^{m}\rightarrow\mathbb{R}$
denote the $k^{th}$ component of $f$. Then $f$ is\textbf{ continuous
piecewise affine} if each $f_{k}$ is. Choose some piece $S_{k}$
from each $f_{k}$ and let $S=\cap_{k=1}^{n}S_{k}$, with $S\ne\emptyset$.
Then $S$ is a piece of $f$, on which we have $f(\boldsymbol{x})=A\boldsymbol{x}+\boldsymbol{b}$
for some $A\in\mathbb{R}^{n\times m}$ and $\boldsymbol{b}\in\mathbb{R}^{n}$.

\end{definition}

First, we prove an intuitive statement about the geometry of the pieces
of continuous piecewise affine functions.

\begin{theorem}\label{theorem:convex_polytope}

Let $f:\mathbb{R}^{m}\rightarrow\mathbb{R}^{n}$ be continuous piecewise
affine. Then $f$ admits a representation in which every piece is
a convex polytope.

\end{theorem}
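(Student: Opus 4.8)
The plan is to construct a single common refinement of all the pieces from the hyperplane arrangement generated by the affine functions appearing in $f$, and then to show that $f$ is affine on each cell of that arrangement. First I would pass to a finite list of scalar affine functions: by Definition \ref{def:piecewise_affine} each component $f_{k}$ is active as one of finitely many affine functions on its pieces, so collecting these across all $n$ components yields a finite family $g_{1},\dots,g_{N}\colon\mathbb{R}^{m}\to\mathbb{R}$ such that at every point each $f_{k}$ agrees with some $g_{i}$. For every pair $i\neq j$ with $g_{i}\not\equiv g_{j}$ the set $H_{ij}=\{\boldsymbol{x}:g_{i}(\boldsymbol{x})=g_{j}(\boldsymbol{x})\}$ is a hyperplane. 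Let $C_{1},\dots,C_{P}$ be the closures of the connected components of $\mathbb{R}^{m}\setminus\bigcup_{i,j}H_{ij}$. Each $C_{r}$ is a finite intersection of closed half-spaces, hence convex, closed, and connected, and since $\bigcup_{i,j}H_{ij}$ is nowhere dense the $C_{r}$ cover $\mathbb{R}^{m}$.

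The heart of the argument is to show that each component $f_{k}$ is affine on every $C_{r}$. Fix $k$ and $r$ and set $T_{i}=\{\boldsymbol{x}\in\interior{C_{r}}:f_{k}(\boldsymbol{x})=g_{i}(\boldsymbol{x})\}$. On $\interior{C_{r}}$ no two distinct functions $g_{i},g_{j}$ coincide, since they can agree only on $H_{ij}$, which misses $\interior{C_{r}}$; hence the $T_{i}$ are pairwise disjoint, and they cover $\interior{C_{r}}$ because every $f_{k}$ agrees with some $g_{i}$ at each point. Each $T_{i}$ is relatively closed in $\interior{C_{r}}$ by continuity of $f_{k}$ and $g_{i}$, and because the family is finite each $T_{i}$ is then also relatively open, being the complement of the union of the remaining closed $T_{j}$. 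Connectedness of $\interior{C_{r}}$ therefore forces exactly one $T_{i}$ to be nonempty, so $f_{k}=g_{i}$ throughout $\interior{C_{r}}$, and by density of $\interior{C_{r}}$ in $C_{r}$ together with continuity, $f_{k}=g_{i}$ on all of $C_{r}$. Applying this to every component gives $f(\boldsymbol{x})=A\boldsymbol{x}+\boldsymbol{b}$ on $C_{r}$ as in Definition \ref{def:piecewise_affine_onto_Rn}, so $\{C_{r}\}_{r=1}^{P}$ is the desired representation with convex-polytope pieces.

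The step I expect to be the main obstacle is ruling out a ``spurious switch,'' i.e.~showing that $f$ cannot change from one active affine function to another within a single cell. The representation supplied by Definition \ref{def:piecewise_affine} gives no control over how its pieces $S_{k}$ sit relative to the arrangement, so one cannot simply read the affine form off that representation; the disjointness-plus-connectedness argument above is precisely what excludes a switch, and it rests essentially on the fact that two distinct affine functions cannot cross in the interior of a cell. Two minor points remain to be handled carefully: the degenerate case $g_{i}\equiv g_{j}$, which contributes no hyperplane and should just be collapsed to a single member of the family, and the observation that the cells may be unbounded, so ``polytope'' here is understood in the sense of a finite intersection of closed half-spaces.
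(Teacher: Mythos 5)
Your proof is correct, but it takes a genuinely different route from the paper's. The paper invokes the max--min representation theorem for continuous piecewise linear functions (Ovchinnikov): each component $f_{k}$ is written as a maximum of minima of its active affine functions, and since $\min(g_{i},g_{j})$ and $\max(g_{i},g_{j})$ switch across the hyperplane $(\boldsymbol{a}_{i}-\boldsymbol{a}_{j})^{T}\boldsymbol{x}+b_{i}-b_{j}=0$, the pieces are read off as intersections of half-spaces. You instead build the common refinement directly from the hyperplane arrangement $\{g_{i}=g_{j}\}$ and prove affineness on each closed cell by the disjoint--relatively-closed--hence-clopen partition argument on the connected interior, which is exactly the ``no spurious switch'' point that the citation to the max--min theorem quietly absorbs in the paper's version. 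What your approach buys is a self-contained, elementary proof that does not rely on the representation theorem, and it makes explicit the subtlety that the given pieces $S_{k}$ need not align with the arrangement; what the paper's approach buys is brevity and a concrete closed-form expression for $f_{k}$ on each cell. Two small points you should make explicit if you write this up: $\interior{C_{r}}$ is nonempty and connected because $C_{r}$ is the closure of a nonempty open convex cell (convexity of $C_{r}$ gives connectedness of its interior), and the covering claim $\bigcup_{r}C_{r}=\mathbb{R}^{m}$ uses that the arrangement has only finitely many cells, so the union of their closures equals the closure of the dense complement. Neither is a gap, just a line each.
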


\begin{proof}

Let $f_{k}:\mathbb{R}^{m}\rightarrow\mathbb{R}$ denote the $k^{th}$
component of $f$. Now, $f_{k}$ can be written in closed form as
a max-min polynomial \cite{Ovchinnikov:2002:pwl_max_min}. That is,
$f_{k}$ is the maximum of minima of its active functions. Now, for
the minimum of two affine functions we have
\[
\min(g_{i},g_{j})=\min(\boldsymbol{a}_{i}^{T}\boldsymbol{x}+b_{i},\boldsymbol{a}_{j}^{T}\boldsymbol{x}+b_{j}).
\]
This function has two pieces divided by the hyperplane $(\boldsymbol{a}_{i}^{T}-\boldsymbol{a}_{j}^{T})\boldsymbol{x}+b_{i}-b_{j}=0$.
The same can be said of $\max(g_{i},g_{j})$. Thus the pieces of $f_{k}$
are intersections of half-spaces, which are just convex polytopes.
Since the pieces of $f$ are intersections of the pieces of $f_{k}$,
they are convex polytopes as well.

\end{proof}

See figure \ref{fig:bad_local_minimum} in section \ref{sec:Local-minima}
for an example of this result on a specific neural network. Our next
result concerns the composition of piecewise functions, which is essential
for the later sections.

\begin{theorem}\label{theorem:composition_piecewise_affine}

Let $g:\mathbb{R}^{m}\rightarrow\mathbb{R}^{n}$ and $f:\mathbb{R}^{n}\rightarrow\mathbb{R}$
be continuous piecewise affine. Then so is $f\circ g$.

\end{theorem}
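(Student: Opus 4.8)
The plan is to construct pieces for $f \circ g$ by intersecting each piece of $g$ with the preimage, under the relevant affine map, of each piece of $f$. First I would invoke Theorem \ref{theorem:convex_polytope} to fix representations of both $g$ and $f$ in which every piece is a convex polytope. Write the pieces of $g$ as $T_{1},\ldots,T_{p}\subseteq\mathbb{R}^{m}$, with $g(\boldsymbol{x})=A_{i}\boldsymbol{x}+\boldsymbol{b}_{i}$ on $T_{i}$, and the pieces of $f$ as $S_{1},\ldots,S_{q}\subseteq\mathbb{R}^{n}$, with $f(\boldsymbol{y})=\boldsymbol{c}_{j}^{T}\boldsymbol{y}+d_{j}$ on $S_{j}$.

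For each pair $(i,j)$ I would define the candidate piece
\[
U_{ij}=T_{i}\cap\{\boldsymbol{x}\in\mathbb{R}^{m}:A_{i}\boldsymbol{x}+\boldsymbol{b}_{i}\in S_{j}\}.
\]
On $U_{ij}$ the composition is affine, since for $\boldsymbol{x}\in U_{ij}\subseteq T_{i}$ we have $f(g(\boldsymbol{x}))=\boldsymbol{c}_{j}^{T}(A_{i}\boldsymbol{x}+\boldsymbol{b}_{i})+d_{j}$, an affine function of $\boldsymbol{x}$. It then remains to check the bookkeeping required by Definition \ref{def:piecewise_affine}: there are at most $pq$ such sets, so finitely many; they cover $\mathbb{R}^{m}$, because any $\boldsymbol{x}$ lies in some $T_{i}$, whence $g(\boldsymbol{x})=A_{i}\boldsymbol{x}+\boldsymbol{b}_{i}$ lies in some $S_{j}$, placing $\boldsymbol{x}\in U_{ij}$; and $f\circ g$ is continuous as a composition of continuous functions, so no gluing across piece boundaries need be verified separately.

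The only step requiring genuine argument, and hence the main obstacle, is showing that each $U_{ij}$ is a convex polytope and in particular closed and connected, as the definition demands. Here I would use that $S_{j}$, being a convex polytope, is a finite intersection of half-spaces $H=\{\boldsymbol{y}:\boldsymbol{p}^{T}\boldsymbol{y}\le r\}$, and that the preimage of each such half-space under the affine map $\boldsymbol{x}\mapsto A_{i}\boldsymbol{x}+\boldsymbol{b}_{i}$ is again a half-space,
\[
\{\boldsymbol{x}:A_{i}\boldsymbol{x}+\boldsymbol{b}_{i}\in H\}=\{\boldsymbol{x}:(A_{i}^{T}\boldsymbol{p})^{T}\boldsymbol{x}\le r-\boldsymbol{p}^{T}\boldsymbol{b}_{i}\}.
\]
Thus $\{\boldsymbol{x}:A_{i}\boldsymbol{x}+\boldsymbol{b}_{i}\in S_{j}\}$ is a finite intersection of half-spaces, and intersecting it with the polytope $T_{i}$ yields another convex polytope, which is closed and, being convex, connected. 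This establishes that $\{U_{ij}\}$ is a valid continuous piecewise affine representation of $f\circ g$ and completes the proof.
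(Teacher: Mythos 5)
Your proposal is correct and follows essentially the same route as the paper: intersect each polytope piece of $g$ with the preimage of each polytope piece of $f$ under the corresponding affine active function, observe the composition is affine there, and check that these sets cover $\mathbb{R}^{m}$. Your explicit half-space computation showing each $U_{ij}$ is itself a convex polytope (hence closed and connected) is a slightly more detailed justification than the paper's appeal to convexity of affine preimages, but it is the same argument in substance.
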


\begin{proof}

To establish continuity, note that the composition of continuous functions
is continuous.

Let $S$ be a piece of $g$ and $T$ a piece of $f$ such that $S\cap g^{-1}(T)\ne\emptyset$,
where $g^{-1}(T)$ denotes the inverse image of $T$. By theorem \ref{theorem:convex_polytope},
we can choose $S$ and $T$ to be convex polytopes. Since $g$ is
affine, $g^{-1}(T)$ is closed and convex \cite{Boyd:2004:ConvexOptimization}.
Thus $S\cap g^{-1}(T)$ is a closed, convex set on which we can write
\begin{align}
f(\boldsymbol{x}) & =\boldsymbol{a}^{T}\boldsymbol{x}+b\label{eq:pwa1}\\
g(\boldsymbol{x}) & =C\boldsymbol{x}+\boldsymbol{d}.\nonumber 
\end{align}
Thus
\begin{align}
f\circ g(\boldsymbol{x}) & =\boldsymbol{a}^{T}C\boldsymbol{x}+\boldsymbol{a}^{T}\boldsymbol{d}+b\label{eq:pwa2}
\end{align}
which is an affine function.

Now, consider the finite set of all such pieces $S\cap g^{-1}(T)$.
The union of $g^{-1}(T)$ over all pieces $T$ is just $\mathbb{R}^{n}$,
as is the union of all pieces $S$. Thus we have
\begin{align*}
\cup_{S}\cup_{T}\left(S\cap g^{-1}(T)\right) & =\cup_{S}\left(S\cap\cup_{T}g^{-1}(T)\right)\\
 & =\cup_{S}\left(S\cap\mathbb{R}^{n}\right)\\
 & =\mathbb{R}^{n}.
\end{align*}
 Thus $f\circ g$ is piecewise affine on $\mathbb{R}^{n}$.

\end{proof}

We now turn to continuous piecewise convex functions, of which continuous
piecewise affine functions are a subset.

\begin{theorem}\label{theorem:covex_piecewise}

Let $g:\mathbb{R}^{m}\rightarrow\mathbb{R}^{n}$ be a continuous piecewise
affine function, and $h:\mathbb{R}^{n}\rightarrow\mathbb{R}$ a convex
function. Then $f=h\circ g$ is continuous piecewise convex.

\end{theorem}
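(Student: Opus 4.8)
The plan is to push the piecewise structure of $g$ through the composition, leaning on two standard facts: that the composition of continuous functions is continuous, and that precomposing a convex function with an affine map preserves convexity. First I would dispose of continuity immediately, since $f = h \circ g$ is a composition of continuous functions and is therefore continuous.

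Next I would invoke Theorem \ref{theorem:convex_polytope} to fix a representation of $g$ whose pieces $S_1, \ldots, S_M$ are all convex polytopes; these are closed, convex, and cover $\mathbb{R}^m$. By Definition \ref{def:piecewise_affine_onto_Rn}, on each piece $S_k$ the map $g$ agrees with an affine function, say $g(\boldsymbol{x}) = A_k \boldsymbol{x} + \boldsymbol{b}_k$ for suitable $A_k \in \mathbb{R}^{n \times m}$ and $\boldsymbol{b}_k \in \mathbb{R}^n$. The heart of the argument is then to define, for each $k$, the candidate active function $g_k(\boldsymbol{x}) = h(A_k \boldsymbol{x} + \boldsymbol{b}_k)$ on all of $\mathbb{R}^m$. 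Since $h$ is convex and $\boldsymbol{x} \mapsto A_k \boldsymbol{x} + \boldsymbol{b}_k$ is affine, $g_k$ is convex by the standard affine-precomposition rule. On $S_k$ we have $f(\boldsymbol{x}) = h(g(\boldsymbol{x})) = g_k(\boldsymbol{x})$, so $g_k$ is active on $S_k$; because the $S_k$ are convex and the $g_k$ are convex, the representation $\{(S_k, g_k)\}$ exhibits $f$ as continuous piecewise convex in the sense of Definition \ref{def:piecewise_affine}.

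I do not expect a genuine obstacle here, as the work is largely bookkeeping that reuses the polytope representation already secured in Theorem \ref{theorem:convex_polytope}. The one point deserving care is to ensure that each active function $g_k$ is defined and convex on the whole of $\mathbb{R}^m$, not merely on its piece, since the definition of piecewise convexity asks for convex active functions globally. This is automatic, because $h(A_k \boldsymbol{x} + \boldsymbol{b}_k)$ is well defined and convex everywhere, so the convexity requirements on both the pieces and the active functions are satisfied at once.
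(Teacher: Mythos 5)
Your proposal is correct and follows essentially the same route as the paper: restrict to the convex-polytope pieces of $g$ guaranteed by Theorem \ref{theorem:convex_polytope}, observe that on each piece $f$ agrees with $h(A_k\boldsymbol{x}+\boldsymbol{b}_k)$, which is convex as the precomposition of a convex function with an affine map, and note that the pieces cover $\mathbb{R}^m$. Your extra remark that each active function is defined and convex on all of $\mathbb{R}^m$ is a small but welcome tightening of a point the paper leaves implicit.
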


\begin{proof}

On each piece $S$ of $g$ we can write
\[
f(\boldsymbol{x})=h(A\boldsymbol{x}+\boldsymbol{b}).
\]

This function is convex, as it is the composition of a convex and
an affine function \cite{Boyd:2004:ConvexOptimization}. Furthermore,
$S$ is convex by theorem \ref{theorem:convex_polytope}. This establishes
piecewise convexity by the proof of theorem \ref{theorem:composition_piecewise_affine}.

\end{proof}

Our final theorem concerns the arithmetic mean of continuous piecewise
convex functions, which is essential for the analysis of neural networks.

\begin{theorem}\label{theorem:mean_piecewise_convex}

Let $f_{1},f_{2},...,f_{N}$ be continuous piecewise convex functions.
Then so is their arithmetic mean $(1/N)\sum_{i=1}^{N}f_{i}(\boldsymbol{x})$.

\end{theorem}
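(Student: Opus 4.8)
The plan is to build a common refinement of the pieces of the individual functions and then verify the defining properties of a continuous piecewise convex function on that refinement. First I would dispose of continuity: each $f_i$ is continuous, a finite sum of continuous functions is continuous, and multiplication by the positive scalar $1/N$ preserves continuity, so the mean is continuous on all of $\mathbb{R}^n$.

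For the piecewise convex structure, suppose $f_i$ has pieces $\{S_k^{(i)}\}$ with convex active function $g_k^{(i)}$ on $S_k^{(i)}$, where each $S_k^{(i)}$ is closed and convex. I would form the finite family of intersections $S=\bigcap_{i=1}^{N} S_{k_i}^{(i)}$ ranging over all index tuples $(k_1,\ldots,k_N)$, discarding those that are empty. On each nonempty $S$, every $f_i$ agrees with the convex function $g_{k_i}^{(i)}$, so the mean agrees with $(1/N)\sum_{i=1}^{N} g_{k_i}^{(i)}$, which is convex as a nonnegatively weighted sum of convex functions. Moreover $S$ is a finite intersection of closed convex sets, hence itself closed and convex; since convexity implies connectedness, $S$ qualifies as a piece.

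Finally I would check that these refined pieces cover $\mathbb{R}^n$, reusing the union computation from the proof of Theorem \ref{theorem:composition_piecewise_affine}. For each fixed $i$ the pieces $\{S_k^{(i)}\}_k$ cover $\mathbb{R}^n$, so any point lies in at least one piece of each $f_i$ and therefore in the corresponding intersection $S$; hence the nonempty intersections cover $\mathbb{R}^n$. As there are finitely many of them, and each carries a convex active function on a closed convex set, the mean is continuous piecewise convex.

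The main obstacle here is bookkeeping rather than depth: I must confirm that the refinement stays a finite cover and that intersecting pieces destroys none of the required properties. The one point demanding genuine care is that the definition asks for pieces that are convex, not merely closed and connected; this is preserved precisely because convexity is stable under intersection. Had we only connectedness to work with, intersections of pieces could fail to be connected and the refinement argument would break down.
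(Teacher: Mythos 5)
Your proof is correct and follows essentially the same route as the paper: the paper factors the argument into a pairwise-sum lemma plus a positive-scalar lemma and then composes them, whereas you perform the $N$-fold common refinement of the pieces in one step, but both rest on the same two facts (sums of convex functions are convex; intersections of closed convex sets are closed and convex) and the same covering argument. Your closing observation that convexity of the pieces, rather than mere connectedness, is what makes the refinement work is a point the paper leaves implicit.
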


The proof takes the form of two lemmas.

\begin{lemma}\label{lemma:sum_piecewise_convex}

Let $f_{1}$ and $f_{2}$ be a pair of continuous piecewise convex
functions on $\mathbb{R}^{n}$. Then so is $f_{1}+f_{2}$.

\end{lemma}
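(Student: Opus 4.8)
The plan is to form the common refinement of the two given piecewise decompositions and verify that each required property is inherited on the intersected pieces. I would write $S_{1},\dots,S_{M_{1}}$ for the convex pieces of $f_{1}$ with convex active functions $g_{1},\dots,g_{M_{1}}$, and $T_{1},\dots,T_{M_{2}}$ for the convex pieces of $f_{2}$ with convex active functions $h_{1},\dots,h_{M_{2}}$. The candidate pieces of $f_{1}+f_{2}$ are then the finite collection of all nonempty intersections $S_{i}\cap T_{j}$.

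On each such intersection I claim $f_{1}+f_{2}$ agrees with the single function $g_{i}+h_{j}$. Indeed, $f_{1}=g_{i}$ on $S_{i}$ and $f_{2}=h_{j}$ on $T_{j}$, so on the overlap their sum is $g_{i}+h_{j}$, which is convex as the sum of two convex functions. The set $S_{i}\cap T_{j}$ is convex, being the intersection of convex sets, hence connected; and it is closed as the intersection of closed sets. Thus each refined piece and its active function meet the requirements of definition \ref{def:piecewise_affine} for continuous piecewise convexity.

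To see that the refined pieces cover $\mathbb{R}^{n}$, I would reuse the union argument from the proof of theorem \ref{theorem:composition_piecewise_affine}: since $\cup_{i}S_{i}=\mathbb{R}^{n}$ and $\cup_{j}T_{j}=\mathbb{R}^{n}$, distributing intersection over union gives $\cup_{i}\cup_{j}(S_{i}\cap T_{j})=\cup_{i}(S_{i}\cap\mathbb{R}^{n})=\mathbb{R}^{n}$. Continuity of $f_{1}+f_{2}$ is immediate, as the sum of continuous functions is continuous.

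The argument is essentially routine, so there is no deep obstacle; the only point that demands care is confirming that the active function on a refined piece is unambiguous. Because the pieces of a continuous piecewise function merely cover rather than partition the domain, a point may lie in several $S_{i}$, but well-definedness of $f_{1}$ forces the corresponding active functions to coincide there, and likewise for $f_{2}$. Hence $g_{i}+h_{j}$ is well defined on $S_{i}\cap T_{j}$ regardless of which representative pieces are selected, and the refinement yields a valid continuous piecewise convex representation of $f_{1}+f_{2}$.
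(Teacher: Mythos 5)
Your proposal is correct and follows essentially the same route as the paper's proof: intersect the pieces of $f_{1}$ and $f_{2}$, observe that the sum of the active convex functions is convex on each nonempty intersection, and that the intersection of convex sets is convex. You simply spell out a few details the paper leaves implicit (closedness and connectedness of the refined pieces, the covering argument, and well-definedness of the active functions), which is fine but not a different argument.
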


\begin{proof}

Let $S_{1}$ be a piece of $f_{1}$, and $S_{2}$ a piece of $f_{2}$,
with $S_{1}\cap S_{2}\ne\emptyset$. Note that the sum of convex functions
is convex \cite{Rockafellar:1970:ConvexAnalysis}. Thus $f_{1}+f_{2}$
is convex on $S_{1}\cap S_{2}$. Furthermore, $S_{1}\cap S_{2}$ is
convex because it is an intersection of convex sets \cite{Rockafellar:1970:ConvexAnalysis}.
Since this holds for all pieces of $f_{1}$ and $f_{2}$, we have
that $f_{1}+f_{2}$ is continuous piecewise convex on $\mathbb{R}^{n}$.

\end{proof}

\begin{lemma}

Let $\alpha>0$, and let $f$ be a continuous piecewise convex function.
Then so is $\alpha f$.

\end{lemma}

\begin{proof}

The continuous function $\alpha f$ is convex on every piece of $f$.

\end{proof}

Having established that continuous piecewise convexity is closed under
addition and positive scalar multiplication, we can see that it is
closed under the arithmetic mean, which is just the composition of
these two operations.

\section{Neural networks\label{sec:Neural-networks}}

In this work, we define a neural network to be a composition of functions
of two kinds: a convex continuously differentiable objective (or loss)
function $h$, and continuous piecewise affine functions $g_{1},g_{2},...,g_{N}$,
constituting the $N$ layers. Furthermore, the outermost function
must be $h$, so that we have
\[
f=h\circ g_{N}\circ g_{N-1}\circ...\circ g_{1}
\]
where $f$ denotes the entire network. This definition is not as restrictive
as it may seem upon first glance. For example, it is easily verified
that the rectified linear unit (ReLU) neuron is continuous piecewise
affine, as we have 
\[
g(\boldsymbol{x})=\max(\boldsymbol{0},A\boldsymbol{x}+\boldsymbol{b}),
\]
where the maximum is taken pointwise. It can be shown that maxima
and minima of affine functions are piecewise affine \cite{Ovchinnikov:2002:pwl_max_min}.
This includes the convolutional variant, in which $A$ is a Toeplitz
matrix. Similarly, max pooling is continuous piecewise linear, while
mean pooling is simply linear. Furthermore, many of the objective
functions commonly seen in machine learning are convex and continuously
differentiable, as in least squares and logistic regression. Thus
this seemingly restrictive class of neural networks actually encompasses
the current state of the art.

By theorem \ref{theorem:composition_piecewise_affine}, the composition
of all layers $g=g_{N}\circ g_{N-1}\circ...\circ g_{1}$ is continuous
piecewise affine. Therefore, a neural network is ultimately the composition
of a continuous convex function with a single continuous piecewise
affine function. Thus by theorem \ref{theorem:covex_piecewise} the
network is continuous piecewise convex. Figure \ref{fig:network}
provides a visualization of this result for the example network
\begin{equation}
f(x,y)=\left(2-\left[\left[x-y\right]_{+}-\left[x+y\right]_{+}+1\right]_{+}\right)^{2},\label{eq:example_network}
\end{equation}
where $\left[x\right]_{+}=\max(x,0)$. For clarity, this is just the
two-layer ReLU network
\[
f(x,y,z)=\left(z-\left[a_{5}\left[a_{1}x+a_{2}y\right]_{+}+a_{6}\left[a_{3}x+a_{4}y\right]_{+}+b_{1}\right]_{+}\right)^{2}
\]
with the squared error objective and a single data point $((x,y),z)$,
setting $z=2$ and $a_{2}=a_{6}=-1$, with all other parameters set
to $1$.

\begin{figure}
\begin{centering}
\includegraphics[scale=0.5]{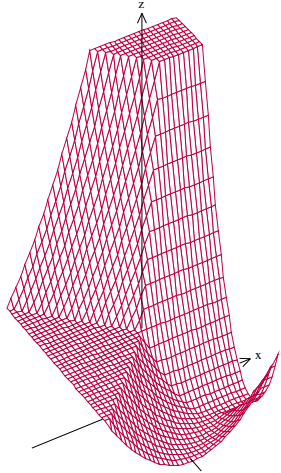}
\par\end{centering}
\centering{}\caption{The neural network of equation \ref{eq:example_network}, on the unit
square. Although $f$ is not convex on $\mathbb{R}^{2}$, it is convex
in each piece, and each piece is a convex set.}
\label{fig:network}
\end{figure}
Before proceeding further, we must define a special kind of differentiability
for piecewise continuous functions, and show that this holds for neural
networks.

\begin{definition}

Let $f$ be piecewise continuous. We say that $f$ is \textbf{piecewise
continuously differentiable} if each active function $g$ is continuously
differentiable.

\end{definition}

To see that neural networks are piecewise continuously differentiable,
note that the objective $h$ is continuously differentiable, as are
the affine active functions of the layers. Thus their composition
is continuously differentiable. It follows that non-differentiable
points are found only on the boundaries between pieces.

\section{Network parameters of a single layer\label{sec:Network-parameters-of-single-layer}}

In the previous section we have defined neural networks as functions
of labeled data. These are the functions relevant during testing,
where parameters are constant and data is variable. In this section,
we extend these results to the case where data is constant and parameters
are variable, which is the function to optimized during training.
For example, consider the familiar equation
\[
f=(ax+b-y)^{2}
\]
with parameters $(a,b)$ and data $(x,y$). During testing, we hold
$(a,b)$ constant, and consider $f$ as a function of the data $(x,y)$.
During training, we hold $(x,y)$ constant and consider $f$ as a
function of the parameters $(a,b)$. This is what we mean when we
say that a network is being ``considered as a function of its parameters\footnote{This is made rigorous by taking cross-sections of point sets in section
\ref{sec:Network-parameters-of-multiple-layers}. }.'' This leads us to an additional stipulation on our definition
of a neural network. That is, each layer must be piecewise affine
\textit{as a function of its parameters} as well. This is easily verified
for all of the layer types previously mentioned. For example, with
the ReLU neuron we have
\begin{equation}
f(A,\boldsymbol{b})=\left[A\boldsymbol{x}+\boldsymbol{b}\right]_{+}\label{eq:ReLU}
\end{equation}
 so for $\left(A\boldsymbol{x}+\boldsymbol{b}\right)_{k}\ge0$ we
have that the $k^{th}$ component of $f$ is linear in $(A,\boldsymbol{b})$,
while for $\left(A\boldsymbol{x}+\boldsymbol{b}\right)_{k}<0$ it
is constant. To see this, we can re-arrange the elements of $A$ into
a column vector $\boldsymbol{a}$, in row-major order, so that we
have
\begin{align}
A\boldsymbol{x}+\boldsymbol{b} & =\begin{pmatrix}\boldsymbol{x^{T}} & \boldsymbol{0^{T}} & ... & ... & \boldsymbol{0^{T}} & \boldsymbol{1}^{T}\\
\boldsymbol{0^{T}} & \boldsymbol{x^{T}} & \boldsymbol{0}^{T} & ... & \boldsymbol{0}^{T} & \boldsymbol{1}^{T}\\
... & ... & ... & ... & ... & ...\\
\boldsymbol{0}^{T} & ... & ... & \boldsymbol{0}^{T} & \boldsymbol{x}^{T} & \boldsymbol{1}^{T}
\end{pmatrix}\begin{pmatrix}\boldsymbol{a}\\
\boldsymbol{b}
\end{pmatrix}.\label{eq:ReLU_expanded}
\end{align}

In section \ref{sec:Neural-networks} we have said that a neural network,
considered as a function of its input data, is convex and continuously
differentiable on each piece. Now, a neural network need \textit{not}\textbf{
}be piecewise convex as a function of the entirety of its parameters\footnote{To see this, consider the following two-layer network: $h(x)=x$,
$g_{2}(x)=ax$, and $g_{1}(x)=bx$. For $f=h\circ g_{2}\circ g_{1}$
we have $f(x)=abx$. Now fix the input as $x=1$. Considered as a
function of its parameters, this is $f(a,b)=ab$, which is decidedly
not convex.}. However, we can regain piecewise convexity by considering it only
as a function of the parameters in a single layer, all others held
constant.

\begin{theorem}\label{theorem:neural_network_single_layer}

A neural network $f$ is continuous piecewise convex and piecewise
continuously differentiable as a function of the parameters in a single
layer.

\end{theorem}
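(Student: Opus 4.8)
The plan is to reduce the statement to the composition machinery of Section 2 and then apply Theorem \ref{theorem:covex_piecewise}. Fix the layer whose parameters we vary, say layer $\ell$ with parameter vector $\boldsymbol{\theta}$, and hold the input data and all parameters of the remaining layers constant. First I would observe that the sub-network $g_{\ell-1}\circ\cdots\circ g_{1}$ applied to the fixed input produces a constant vector $\boldsymbol{v}$, since nothing it depends on is permitted to vary. Thus the only place $\boldsymbol{\theta}$ enters the network is through $g_{\ell}$ evaluated at the fixed input $\boldsymbol{v}$.

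Next I would invoke the additional stipulation introduced in this section: each layer is continuous piecewise affine as a function of its own parameters. Hence $\boldsymbol{\theta}\mapsto g_{\ell}(\boldsymbol{\theta};\boldsymbol{v})$ is continuous piecewise affine. Meanwhile the trailing layers $g_{\ell+1},\ldots,g_{N}$ have fixed parameters, so each is continuous piecewise affine in its input, and their composition $G=g_{N}\circ\cdots\circ g_{\ell+1}$ is continuous piecewise affine by Theorem \ref{theorem:composition_piecewise_affine}, applied componentwise exactly as was already done when forming $g=g_{N}\circ\cdots\circ g_{1}$ in Section \ref{sec:Neural-networks}.

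The heart of the argument is then a single further application of the composition theorem: the map $\boldsymbol{\theta}\mapsto G\bigl(g_{\ell}(\boldsymbol{\theta};\boldsymbol{v})\bigr)$ is a composition of two continuous piecewise affine functions, hence continuous piecewise affine in $\boldsymbol{\theta}$ by Theorem \ref{theorem:composition_piecewise_affine}. Composing with the convex objective $h$, Theorem \ref{theorem:covex_piecewise} immediately yields that $f$, viewed as a function of $\boldsymbol{\theta}$, is continuous piecewise convex. For piecewise continuous differentiability I would note that on each piece the active function is $h$ composed with affine maps; since $h$ is continuously differentiable and affine maps are smooth, each active function is continuously differentiable, which is exactly the definition required.

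The step I expect to require the most care is the appeal to the composition theorem for the inner map $g_{\ell}(\,\cdot\,;\boldsymbol{v})$: this function is piecewise affine in the \emph{parameters} rather than in an input vector, so I must make explicit that Theorem \ref{theorem:composition_piecewise_affine} is agnostic to the interpretation of its argument and applies verbatim once everything but $\boldsymbol{\theta}$ is frozen. The only other subtlety is that Theorem \ref{theorem:composition_piecewise_affine} is stated for a scalar-valued outer function, so invoking it for the vector-valued $G$ relies on the componentwise extension via Definition \ref{def:piecewise_affine_onto_Rn}; both points are routine, so the proof should be short.
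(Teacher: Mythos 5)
Your argument is essentially the paper's own proof: freeze the lower layers into a constant vector, use the stipulation that each layer is piecewise affine in its own parameters, compose with the upper layers via Theorem \ref{theorem:composition_piecewise_affine}, and finish with Theorem \ref{theorem:covex_piecewise}; the differentiability remark is also identical. The only thing you omit is the final step the paper includes: the training objective is the arithmetic mean of $f$ over all data points, and the paper invokes Theorem \ref{theorem:mean_piecewise_convex} to carry piecewise convexity (and notes that the mean preserves piecewise continuous differentiability) from the single-point case to the full dataset. Adding that one sentence would make your proof complete as the paper intends the statement.
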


\begin{proof}

For the time being, assume the input data consists of a single point
$\boldsymbol{x}$. By definition $f$ is the composition of a convex
objective $h$ and layers $g_{1},g_{2},...,g_{N}$, with $g_{1}$
a function of $\boldsymbol{x}$. Let $f_{m}(\boldsymbol{x})$ denote
the network $f$ considered as a function of the parameters of layer
$g_{m}$, all others held constant. Now, the layers $g_{m-1}\circ g_{m-2}\circ...\circ g_{1}$
are constant with respect to the parameters of $g_{m}$, so we can
write $\boldsymbol{y}=g_{m-1}\circ g_{m-2}\circ...\circ g_{1}(\boldsymbol{x})$.
Thus on each piece of $g_{m}$ we have
\[
g_{m}=A\boldsymbol{y}+\boldsymbol{b}.
\]
By definition $g_{m}$ is a continuous piecewise affine function of
its parameters. Since $\boldsymbol{y}$ is constant, we have that
$\tilde{g}_{m}=g_{m}\circ g_{m-1}\circ...\circ g_{1}$ is a continuous
piecewise affine function of the parameters of $g_{m}$. Now, by theorem
\ref{theorem:composition_piecewise_affine} we have that $g=g_{N}\circ g_{N-1}\circ...\circ\tilde{g}_{m}$
is a continuous piecewise affine function of the parameters of $g_{m}$.
Thus by theorem \ref{theorem:covex_piecewise}, $f_{m}$ is continuous
piecewise convex.

To establish piecewise continuous differentiability, recall that affine
functions are continuously differentiable, as is $h$.

Having established the theorem for the case of a single data point,
consider the case where we have multiple data points, denoted $\{\boldsymbol{x}_{k}\}_{k=1}^{M}$.
Now, by theorem \ref{theorem:mean_piecewise_convex} the arithmetic
mean $(1/M)\sum_{k=1}^{M}f_{m}(\boldsymbol{x}_{k})$ is continuous
piecewise convex. Furthermore, the arithmetic mean preserves piecewise
continuous differentiability. Thus these results hold for the mean
value of the network over the dataset.

\end{proof}

We conclude this section with a simple remark which will be useful
in later sections. Let $f_{m}$ be a neural network, considered as
a function of the parameters of the $m^{th}$ layer, and let $S$
be a piece of $f_{m}$. Then the optimization problem
\begin{align}
\mbox{minimize } & f_{m}(\boldsymbol{x})\nonumber \\
\mbox{subject to } & \boldsymbol{x}\in S\label{eq:convex_single_layer}
\end{align}
is convex.

\section{Network parameters of multiple layers\label{sec:Network-parameters-of-multiple-layers}}

In the previous section we analyzed the convexity properties of neural
networks when optimizing the parameters of a single layer, all others
held constant. Now we are ready to extend these results to the ultimate
goal of simultaneously optimizing all network parameters. Although
not convex, the problem has a special convex substructure that we
can exploit in proving future results. We begin by defining this substructure
for point sets and functions.

\begin{definition}

Let $S\subseteq\mathbb{R}^{n}$, let $I\subset\{1,2,...,n\}$, and
let $\boldsymbol{x}\in S$. The set
\[
S_{I}(\boldsymbol{x})=\{\boldsymbol{y}\in S:(\boldsymbol{y}_{k}=\boldsymbol{x}_{k})_{k\notin I}\}
\]
is the \textbf{cross-section} of $S$ intersecting $\boldsymbol{x}$
with respect to $I$.

\end{definition}

In other words, $S_{I}(\boldsymbol{x})$ is the subset of $S$ for
which every point is equal to $\boldsymbol{x}$ in the components
not indexed by $I$. Note that this differs from the typical definition,
which is the intersection of a set with a hyperplane. For example,
$\mathbb{R}_{\{1\}}^{3}(\boldsymbol{0})$ is the $x$-axis, whereas
$\mathbb{R}_{\{1,2\}}^{3}(\boldsymbol{0})$ is the $xy$-plane. Note
also that cross-sections are not unique, for example $\mathbb{R}_{\{1,2\}}^{3}(0,0,0)=\mathbb{R}_{\{1,2\}}^{3}(1,2,0)$.
In this case the first two components of the cross section are irrelevant,
but we will maintain them for notational convenience. We can now apply
this concept to functions on $\mathbb{R}^{n}$.

\begin{definition}

Let $S\subseteq\mathbb{R}^{n}$, let $f:S\rightarrow\mathbb{R}$ and
let $\mathcal{I}$ be a collection of sets covering $\{1,2,...,n\}$.
We say that $f$ is \textbf{multi-convex} with respect to $\mathcal{I}$
if $f$ is convex when restricted to the cross section $S_{I}(\boldsymbol{x})$,
for all $\boldsymbol{x}\in S$ and $I\in\mathcal{I}$.

\end{definition}

This formalizes the notion of restricting a non-convex function to
a variable subset on which it is convex, as in section \ref{sec:Network-parameters-of-single-layer}
when a neural network was restricted to the parameters of a single
layer. For example, let $f(x,y,z)=xy+z$, and let $I_{1}=\{1,3\}$,
and $I_{2}=\{2,3\}$. Then $f_{1}(x,y_{0},z)$ is a convex function
of $(x,z)$ with $y$ fixed at $y_{0}$. Similarly, $f_{2}(x_{0},y,z)$
is a convex function of $(y,z)$ with $x$ fixed at $x_{0}$. Thus
$f$ is multi-convex with respect to $\mathcal{I}=\{I_{1},I_{2}\}$.
To fully define a multi-convex optimization problem, we introduce
a similar concept for point sets.

\begin{definition}

Let $S\subseteq\mathbb{R}^{n}$ and let $\mathcal{I}$ be a collection
of sets covering $\{1,2,...,n\}$. We say that $S$ is \textbf{multi-convex
}with respect to $\mathcal{I}$ if the cross-section $S_{I}(\boldsymbol{x})$
is convex for all $\boldsymbol{x}\in S$ and $I\in\mathcal{I}$.

\end{definition}

This generalizes the notion of biconvexity found in the optimization
literature \cite{Gorski:2007:BiConvex}. From here, we can extend
definition \ref{def:piecewise_affine} to multi-convex functions.
However, we will drop the topological restrictions on the pieces of
our function, since multi-convex sets need not be connected.

\begin{definition}

Let $f:\mathbb{R}^{n}\rightarrow\mathbb{R}$ be a continuous function.
We say that $f$ is \textbf{continuous piecewise multi-convex} if
each there exists a collection of multi-convex functions $g_{1},g_{2},...,g_{N}$
and multi-convex sets $S_{1},S_{2},...,S_{N}$ covering $\mathbb{R}^{n}$
such that for each $k$ we have $f(\boldsymbol{x})=g_{k}(\boldsymbol{x})$
for all $\boldsymbol{x}\in S_{k}$. Next, let $h:\mathbb{R}^{m}\rightarrow\mathbb{R}^{n}$.
Then, $h$ is continuous piecewise multi-convex so long as each component
is, as in definition \ref{def:piecewise_affine_onto_Rn}.

\end{definition}

From this definition, it is easily verified that a continuous piecewise
multi-convex function $f:\mathbb{R}^{m}\rightarrow\mathbb{R}^{n}$
admits a representation where all pieces are multi-convex, as in the
proof of theorem \ref{theorem:convex_polytope}.

Before we can extend the results of section \ref{sec:Network-parameters-of-single-layer}
to multiple layers, we must add one final constraint on the definition
of a neural network. That is, each of the layers must be continuous
piecewise multi-convex, considered as functions of both the parameters
\textit{and} the input. Again, this is easily verified for the all
of the layer types previously mentioned. We have already shown they
are piecewise convex on each cross-section, taking our index sets
to separate the parameters from the input data. It only remains to
show that the number of pieces is finite. The only layer which merits
consideration is the ReLU, which we can see from equation \ref{eq:ReLU}
consists of two pieces for each component: the ``dead'' or constant
region, with $(A\boldsymbol{x})_{j}+b_{j}<0$, and its compliment.
With $n$ components we have at most $2^{n}$ pieces, corresponding
to binary assignments of ``dead'' or ``alive'' for each component.

Having said that each layer is continuous piecewise multi-convex,
we can extend these results to the whole network.

\begin{theorem}

Let $f$ be a neural network, and let $\mathcal{I}$ be a collection
of index sets, one for the parameters of each layer of $f$. Then
$f$ is continuous piecewise multi-convex with respect to $\mathcal{I}$.

\end{theorem}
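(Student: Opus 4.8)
The plan is to build a single global piece decomposition of the parameter space by taking the common refinement of the activation patterns of all $N$ layers, exactly as in the composition argument of Theorem~\ref{theorem:composition_piecewise_affine}, and then to verify the two multi-convexity requirements---on the active functions and on the pieces---cross-section by cross-section, reusing the single-layer argument of Theorem~\ref{theorem:neural_network_single_layer}. As in that theorem, I would first treat a single data point $\boldsymbol{x}$ and then recover the general case from the arithmetic mean via Theorem~\ref{theorem:mean_piecewise_convex}.

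First I would fix the input $\boldsymbol{x}$ and write the parameter vector as $\boldsymbol{\theta}=(\boldsymbol{\theta}_1,\ldots,\boldsymbol{\theta}_N)$, with $I_m\in\mathcal{I}$ indexing the coordinates $\boldsymbol{\theta}_m$ of layer $g_m$. For each layer choose one affine piece $P_m$ (a convex polytope in the joint parameter-input variables by Theorem~\ref{theorem:convex_polytope}), and let $S$ be the set of $\boldsymbol{\theta}$ for which every layer operates inside its chosen piece, so that each $g_m$ acts as a fixed affine map of its parameters and input. Since each layer has finitely many pieces, there are finitely many such $S$, and the covering argument of Theorem~\ref{theorem:composition_piecewise_affine}---that the relevant intersections and preimages exhaust the space---shows they cover the parameter space.

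To see that the active function on $S$ is multi-convex, I would restrict to the cross-section $S_{I_m}(\boldsymbol{\theta})$, varying only $\boldsymbol{\theta}_m$. Then layers $g_1,\ldots,g_{m-1}$ are constant, so the input $\boldsymbol{y}$ to layer $m$ is fixed; on $P_m$ the output of layer $m$ is affine in $\boldsymbol{\theta}_m$, since the layer is jointly affine in parameters and input and $\boldsymbol{y}$ is fixed; and each subsequent layer, having fixed parameters and operating on its own piece, composes affinely, so the entire pre-objective map is affine in $\boldsymbol{\theta}_m$. Composing with the convex $h$ yields a convex function on the cross-section, which is precisely the conclusion of Theorem~\ref{theorem:neural_network_single_layer}.

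The hard part will be showing that each piece $S$ is itself multi-convex, i.e.\ that every cross-section $S_{I_m}(\boldsymbol{\theta})$ is convex. The constraints defining $S$ split into three groups. Those coming from layers before $m$ involve only fixed quantities and are trivially satisfied on the cross-section; the constraint from layer $m$ restricts $\boldsymbol{\theta}_m$ to the polytope $P_m$ at the fixed input $\boldsymbol{y}$, which is convex. The delicate group is the constraints from layers $j>m$: the input to layer $j$ depends on $\boldsymbol{\theta}_m$, but by the affineness established above this dependence is affine, and the slice of the convex piece $P_j$ at the fixed layer-$j$ parameters is convex in the input, so its affine preimage in $\boldsymbol{\theta}_m$ is convex \cite{Boyd:2004:ConvexOptimization}. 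Thus $S_{I_m}(\boldsymbol{\theta})$ is a finite intersection of convex sets and is convex. Continuity of $f$ follows from the composition of continuous functions, and the passage to $M$ data points follows by taking the common refinement of the per-point decompositions and applying Theorem~\ref{theorem:mean_piecewise_convex}, since the arithmetic mean is convex on each cross-section and intersections of multi-convex sets remain multi-convex.
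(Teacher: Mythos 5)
Your overall architecture is sound and is essentially an unrolled version of the paper's argument: the paper packages the induction over layers into Lemma \ref{lemma:composition_multi_convex} (composing a piecewise multi-convex inner function into $f(g(\cdot),\cdot)$) and applies it repeatedly, whereas you form the common refinement of all layers' pieces in one step and check cross-sections directly; your analysis of the constraints from layers $j>m$ (affine dependence on $\boldsymbol{\theta}_m$, preimage of a convex slice of $P_j$) is exactly what that lemma does when it computes $H_{I_X}$ and $H_{I_Y}$. However, there is a genuine error in your premise. A layer is \emph{not} piecewise affine in the joint parameter--input variables, and its pieces are \emph{not} convex polytopes in that joint space, so Theorem \ref{theorem:convex_polytope} does not apply there. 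For a ReLU layer the map $(A,\boldsymbol{b},\boldsymbol{x})\mapsto A\boldsymbol{x}+\boldsymbol{b}$ is bilinear, the piece boundaries $(A\boldsymbol{x}+\boldsymbol{b})_{k}=0$ are quadrics rather than hyperplanes, and the pieces fail even to be convex jointly: in one dimension $\{(a,x):ax\ge0\}$ is the union of two opposite quadrants. Your later assertion that ``the layer is jointly affine in parameters and input'' is false for the same reason. This is precisely why the paper imposes the weaker hypothesis, stated just before the theorem, that each layer is continuous piecewise \emph{multi-convex} jointly in its parameters and input, i.e.\ only the cross-sections (input fixed, or parameters fixed) of the active functions and of the pieces are required to be affine or convex.

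Fortunately the error is not load-bearing. Everywhere you invoke joint affineness or joint convexity of $P_m$, what you actually use is a cross-sectional fact --- affineness of the layer in $\boldsymbol{\theta}_m$ at fixed input, convexity of the slice of $P_j$ at fixed layer-$j$ parameters or fixed input --- and those facts are true and are exactly what the paper's hypothesis supplies. If you replace the appeal to Theorem \ref{theorem:convex_polytope} and the phrase ``jointly affine'' with the joint piecewise multi-convexity (indeed, piecewise bi-affineness) of each layer, your proof goes through, and the remaining steps --- finitely many refined pieces covering the space, convexity of $h$ composed with an affine map on each cross-section, and the passage to the mean over data points via Theorem \ref{theorem:mean_piecewise_convex} and Lemma \ref{lemma:sum_piecewise_multi_convex} --- all match the paper.
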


We begin the proof with a lemma for more general multi-convex functions.

\begin{lemma}\label{lemma:composition_multi_convex}

Let $X\subseteq\mathbb{R}^{n}$, $Y\subseteq\mathbb{R}^{m}$, and
let $g:X\rightarrow Z$ and $f:Z\times Y\rightarrow\mathbb{R}^{n}$
be continuous piecewise multi-convex, $g$ with respect to a collection
of index sets $\mathcal{G}$, and $f$ with respect to $\mathcal{F}=\{I_{Z},I_{Y}\}$,
where $I_{Z}$ indexes the variables in $Z$, and $I_{Y}$ the variables
in $Y$. Then $h(\boldsymbol{x},\boldsymbol{y})=f(g(\boldsymbol{x}),\boldsymbol{y})$
is continuous piecewise multi-convex with respect to $\mathcal{H}=\mathcal{G}\cup\{I_{Y}\}$.

\end{lemma}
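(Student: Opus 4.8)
The plan is to follow the template of Theorem~\ref{theorem:composition_piecewise_affine}, lifting the pieces of $g$ and $f$ to the product domain $X\times Y$ and taking their common refinement. Since piecewise multi-convexity of a vector-valued map is defined componentwise, it suffices to argue for a single scalar component of $f$, so I would assume $f$ is real-valued. Continuity is immediate: $g$ and $f$ are continuous, hence so is $h(\boldsymbol{x},\boldsymbol{y})=f(g(\boldsymbol{x}),\boldsymbol{y})$. Writing $\Phi(\boldsymbol{x},\boldsymbol{y})=(g(\boldsymbol{x}),\boldsymbol{y})$, I would take as the candidate pieces of $h$ the nonempty sets $P=(S\times Y)\cap\Phi^{-1}(T)$, where $S$ ranges over the pieces of $g$ and $T$ over the pieces of $f$. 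On each such $P$ we have $h=f_T(g_S(\boldsymbol{x}),\boldsymbol{y})$ with $g_S,f_T$ the active functions. There are finitely many such sets, and the same telescoping union as in Theorem~\ref{theorem:composition_piecewise_affine} --- using $\bigcup_T\Phi^{-1}(T)=\Phi^{-1}(Z\times Y)=X\times Y$ and $\bigcup_S S=X$ --- shows the $P$ cover $X\times Y$.

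It then remains, for each piece $P$, to verify the two requirements of multi-convexity with respect to $\mathcal{H}=\mathcal{G}\cup\{I_Y\}$: that every cross-section $P_I(\boldsymbol{x},\boldsymbol{y})$ is convex, and that the active function $h_{S,T}$ is convex on each such cross-section. I would treat $I_Y$ and the sets $I\in\mathcal{G}$ separately, as they move disjoint blocks of variables.

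The index set $I_Y$ is the easy case. Fixing $\boldsymbol{x}=\boldsymbol{x}_0$ freezes $g_S(\boldsymbol{x}_0)=\boldsymbol{z}_0$, so the relevant cross-section of $P$ is $\{\boldsymbol{y}:(\boldsymbol{z}_0,\boldsymbol{y})\in T\}$, which is the $I_Y$-cross-section of $T$ and hence convex by multi-convexity of $T$; on it $h$ equals $f_T(\boldsymbol{z}_0,\cdot)$, convex by multi-convexity of $f$ with respect to $\{I_Z,I_Y\}$. The case $I\in\mathcal{G}$ is the crux. Here we fix $\boldsymbol{y}=\boldsymbol{y}_0$ and the components $\boldsymbol{x}_k$ with $k\notin I$, and vary $\boldsymbol{x}_I$; we must then compose the convex function $f_T(\cdot,\boldsymbol{y}_0)$ (convex by the $I_Z$-multi-convexity of $f$) with the map $\boldsymbol{x}_I\mapsto g_S(\boldsymbol{x})$.

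The main obstacle is precisely this composition: convexity is \emph{not} preserved when a convex function is composed with a convex vector-valued map. What rescues the argument is that the active maps $g_S$ are not merely multi-convex but \emph{affine} on each $\mathcal{G}$-cross-section, which is exactly the piecewise-affine structure of the layers established in Section~\ref{sec:Neural-networks} and equation~\ref{eq:ReLU_expanded}. Granting this, $f_T(\cdot,\boldsymbol{y}_0)$ composed with an affine map is convex by the convex-composed-with-affine rule already used in Theorem~\ref{theorem:covex_piecewise}, and the piece itself is multi-convex because $\{\boldsymbol{x}_I:(g_S(\boldsymbol{x}),\boldsymbol{y}_0)\in T\}$ is the preimage of a convex set under an affine map, hence convex, and intersects the convex $\mathcal{G}$-cross-section of $S$ in a convex set. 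I would therefore make the affineness of $g_S$ on $\mathcal{G}$-cross-sections explicit, since bare multi-convexity of $g$ is insufficient for the composition and it is this affine structure that the neural-network setting actually supplies.
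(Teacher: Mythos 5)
Your proposal follows the same overall template as the paper's proof: form the candidate pieces $H=\{(\boldsymbol{x},\boldsymbol{y}):\boldsymbol{x}\in G,\ (g(\boldsymbol{x}),\boldsymbol{y})\in F\}$ from a piece $G$ of $g$ and a piece $F$ of $f$, note that there are finitely many and that they cover $X\times Y$ by the same union argument as in the affine composition theorem, and then check convexity of the active function and of the cross-sections on each such piece. Where you differ is in refusing to accept the step the paper dismisses with ``clearly $h$ is multi-convex on $H$.'' You are right to balk: for $I\in\mathcal{G}$ the restriction of $h$ to a cross-section is the composition of the convex function $f_T(\cdot,\boldsymbol{y}_0)$ with the vector-valued map $\boldsymbol{x}_I\mapsto g_S(\boldsymbol{x})$, and multi-convexity of $g$ alone does not make this convex (take $g(x)=x^{2}$ and $f(z,y)=-z$, so that $h(x,y)=-x^{2}$). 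The lemma as literally stated is therefore too strong, and your fix --- requiring the active functions of $g$ to be affine on the $\mathcal{G}$-cross-sections, which is exactly what the piecewise-affine layers of section \ref{sec:Neural-networks} supply --- is the hypothesis under which the lemma is actually invoked in the theorem that follows. Your treatment of the cross-sections of $H$ is also more careful than the paper's: the paper writes $H_{I_X}(\boldsymbol{x},\boldsymbol{y})=G_{I_X}(\boldsymbol{x})\times\{\boldsymbol{y}\}$, silently dropping the constraint $(g(\boldsymbol{x}),\boldsymbol{y})\in F$, whereas you observe that this constraint cuts $G_{I_X}(\boldsymbol{x})$ with the preimage of a convex set under an affine map, which preserves convexity. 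In short: same decomposition and same covering argument, but your version makes explicit an assumption, and closes a gap, that the paper's one-line assertion leaves open.
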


\begin{proof}

Let $G$ be a piece of $g$, let $F$ be a piece of $f$ and let $H=\{(\boldsymbol{x},\boldsymbol{y}):\boldsymbol{x}\in G,\,(g(\boldsymbol{x}),\boldsymbol{y})\in F\}$,
with $F$ chosen so that $H\ne\emptyset$. Clearly $h$ is multi-convex
on $H$ with respect to $\mathcal{H}$. It remains to show that $H$
is a multi-convex set. Now, let $(\boldsymbol{x},\boldsymbol{y})\in H$
and we shall show that the cross-sections are convex. First, for any
$I_{X}\in\mathcal{G}$ we have $H_{I_{X}}(\boldsymbol{x},\boldsymbol{y})=G_{I_{X}}(\boldsymbol{x})\times\{\boldsymbol{y}\}$.
Similarly, we have $H_{I_{Y}}(\boldsymbol{x},\boldsymbol{y})=\{\boldsymbol{x}\}\times\{\boldsymbol{y}:(\boldsymbol{z},\boldsymbol{y})\in F_{I_{Y}}(g(\boldsymbol{x}),\boldsymbol{y})\}$.
These sets are convex, as they are the Cartesian products of convex
sets \cite{Rockafellar:1970:ConvexAnalysis}. Finally, as in the proof
of theorem \ref{theorem:composition_piecewise_affine}, we can cover
$X\times Y$ with the finite collection of all such pieces $H$, taken
over all $G$ and $F$. 

\end{proof}

Our next lemma extends theorem \ref{theorem:mean_piecewise_convex}
to multi-convex functions.

\begin{lemma}\label{lemma:sum_piecewise_multi_convex}

Let $\mathcal{I}$ be a collection of sets covering $\{1,2,...,n\}$,
and let $f:\mathbb{R}^{n}\rightarrow\mathbb{R}$ and $g:\mathbb{R}^{n}\rightarrow\mathbb{R}$
be continuous piecewise multi-convex with respect to $\mathcal{I}$.
Then so is $f+g$.

\end{lemma}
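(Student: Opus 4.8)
The plan is to reuse the structure of the proof of Lemma \ref{lemma:sum_piecewise_convex}, replacing convexity with multi-convexity at every step. First I would let $S_1$ be a piece of $f$ and $S_2$ a piece of $g$ with $S_1\cap S_2\ne\emptyset$. On $S_1\cap S_2$ the function $f+g$ agrees with the sum of the two active functions, so there are exactly two things to verify: that this sum is multi-convex on $S_1\cap S_2$ with respect to $\mathcal{I}$, and that $S_1\cap S_2$ is itself a multi-convex set. These are the function condition and the set condition of the definition of continuous piecewise multi-convexity.

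For the function condition, I would fix an arbitrary $I\in\mathcal{I}$ and base point $\boldsymbol{x}$, and restrict both active functions to the cross-section $(S_1\cap S_2)_I(\boldsymbol{x})$. By hypothesis each active function is convex on its cross-sections, and the sum of convex functions is convex \cite{Rockafellar:1970:ConvexAnalysis}, so $f+g$ is convex on this cross-section. Since $I$ and $\boldsymbol{x}$ were arbitrary, $f+g$ is multi-convex with respect to $\mathcal{I}$ on the piece. For the set condition, the crucial observation is the identity $(S_1\cap S_2)_I(\boldsymbol{x})=(S_1)_I(\boldsymbol{x})\cap(S_2)_I(\boldsymbol{x})$, which follows immediately from the definition of the cross-section since the defining constraint $(\boldsymbol{y}_k=\boldsymbol{x}_k)_{k\notin I}$ is shared. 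Each factor on the right is convex because $S_1$ and $S_2$ are multi-convex, and an intersection of convex sets is convex \cite{Rockafellar:1970:ConvexAnalysis}; hence every cross-section of $S_1\cap S_2$ is convex, so $S_1\cap S_2$ is multi-convex.

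Finally I would take the finite collection of all such pieces $S_1\cap S_2$, ranging over the pieces of $f$ and $g$. Because the pieces of $f$ cover $\mathbb{R}^n$ and likewise for $g$, the same union manipulation used in the proof of Theorem \ref{theorem:composition_piecewise_affine} shows these intersections cover $\mathbb{R}^n$, and there are finitely many since each of $f$ and $g$ has finitely many pieces. Continuity of $f+g$ is immediate as the sum of continuous functions. The only real (and rather minor) obstacle is establishing the cross-section/intersection identity cleanly; once it is in hand, both the function and set conditions reduce to the elementary facts that convexity is preserved under addition and under intersection, exactly as in the non-piecewise case of Lemma \ref{lemma:sum_piecewise_convex}.
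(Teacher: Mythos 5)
Your proposal is correct and follows essentially the same route as the paper's own proof: both rest on the identity $\left(F\cap G\right)_{I}(\boldsymbol{x})=F_{I}(\boldsymbol{x})\cap G_{I}(\boldsymbol{x})$, the convexity of intersections of convex sets, and the convexity of sums of convex functions on each cross-section, with the pieces of $f+g$ taken to be the pairwise intersections of pieces of $f$ and $g$. You simply spell out the covering and continuity steps that the paper leaves implicit.
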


\begin{proof}

Let $F$ be a piece of $f$ and $G$ be a piece of $g$ with $\boldsymbol{x}\in F\cap G$.
Then for all $I\in\mathcal{I}$, $\left(F\cap G\right)_{I}(\boldsymbol{x})=F_{I}(\boldsymbol{x})\cap G_{I}(\boldsymbol{x})$,
a convex set on which $f+g$ is convex. Thus $f+g$ is continuous
piecewise multi-convex, where the pieces of $f+g$ are the intersections
of pieces of $f$ and $g$.

\end{proof}

We can now prove the theorem.

\begin{proof}

For the moment, assume we have only a single data point. Now, let
$g_{1}$ and $g_{2}$ denote layers of $f$, with parameters $\boldsymbol{\theta}_{1}\in\mathbb{R}^{m},\,\boldsymbol{\theta}_{2}\in\mathbb{R}^{n}$.
Since $g_{1}$ and $g_{2}$ are continuous piecewise multi-convex
functions of their parameters and input, we can write the two-layer
sub-network as $h=f(g_{1}(\boldsymbol{\theta}_{1}),\boldsymbol{\theta}_{2})$.
By repeatedly applying lemma \ref{lemma:composition_multi_convex},
the whole network is multi-convex on a finite number of sets covering
the input and parameter space.

Now we extend the theorem to the whole dataset, where each data point
defines a continuous piecewise multi-convex function $f_{k}$. By
lemma \ref{lemma:sum_piecewise_multi_convex}, the arithmetic mean
$(1/N)\sum_{k=1}^{N}f_{k}$ is continuous piecewise multi-convex.

\end{proof}

In the coming sections, we shall see that multi-convexity allows us
to give certain guarantees about the convergence of various optimization
algorithms. But first, we shall prove some basic results independent
of the optimization procedure. These results were summarized by Gorksi
et al\@.~for the case of biconvex differentiable functions \cite{Gorski:2007:BiConvex}.
Here we extend them to piecewise functions and arbitrary index sets.
First we define a special type of minimum relevant for multi-convex
functions.

\begin{definition}

Let $f:S\rightarrow\mathbb{R}$ and let $\mathcal{I}$ be a collection
of sets covering $\{1,2,...,n\}$. We say that $\boldsymbol{x}_{0}$
is a \textbf{partial minimum} of $f$ with respect to $\mathcal{I}$
if $f(\boldsymbol{x}_{0})\le f(\boldsymbol{x})$ for all $\boldsymbol{x}\in\cup_{I\in\mathcal{I}}S_{I}(\boldsymbol{x}_{0})$.

\end{definition}

In other words, $\boldsymbol{x}_{0}$ is a partial minimum of $f$
with respect to $\mathcal{I}$ if it minimizes $f$ on every cross-section
of $S$ intersecting $\boldsymbol{x}_{0}$, as shown in figure \ref{fig:cross_section}.
By convexity, these points are intimately related to the stationary
points of $f$.
\begin{figure}
\begin{centering}
\includegraphics[scale=0.7]{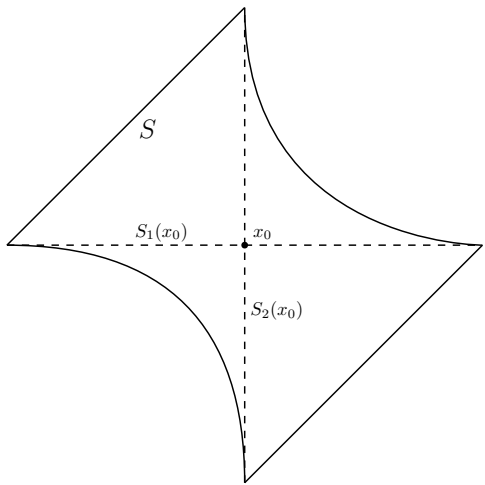}
\par\end{centering}
\caption{Cross-sections of a biconvex set.}
\label{fig:cross_section}
\end{figure}

\begin{theorem}\label{theorem:multi_convex_partial_minimum}

Let $\mathcal{I}=\{I_{1},I_{2},...,I_{m}\}$ be a collection of sets
covering $\{1,2,...,n\}$, let $f:\mathbb{R}^{n}\rightarrow\mathbb{R}$
be continuous piecewise multi-convex with respect to $\mathcal{I}$,
and let $\nabla f(\boldsymbol{x}_{0})=\boldsymbol{0}$. Then $\boldsymbol{x}_{0}$
is a partial minimum of $f$ on every piece containing $\boldsymbol{x}_{0}$.

\end{theorem}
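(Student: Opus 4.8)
The plan is to fix an arbitrary piece $S_k$ containing $\boldsymbol{x}_0$, with active function $g_k$, together with an arbitrary index set $I\in\mathcal{I}$, and to show that $\boldsymbol{x}_0$ minimizes $f$ over the cross-section $C=(S_k)_I(\boldsymbol{x}_0)$. Since the partial-minimum condition on the piece $S_k$ requires $f(\boldsymbol{x}_0)\le f(\boldsymbol{x})$ for all $\boldsymbol{x}\in\cup_{I\in\mathcal{I}}(S_k)_I(\boldsymbol{x}_0)$, and this union decomposes cross-section by cross-section, establishing the claim for a single arbitrary $I$ suffices. Two facts I would record at the outset are that $C$ is convex (because $S_k$ is a multi-convex set) and that the restriction of $f$ to $C$ is convex (because $g_k$ is multi-convex and $f=g_k$ on $S_k\supseteq C$). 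The problem thus reduces to a first-order optimality statement for the convex function $f|_C$ at the interior-or-boundary point $\boldsymbol{x}_0\in C$.

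The key mechanism is to transfer the hypothesis $\nabla f(\boldsymbol{x}_0)=\boldsymbol{0}$, which concerns the global piecewise function, onto the piece. For an arbitrary $\boldsymbol{x}\in C$ I would restrict $f$ to the segment from $\boldsymbol{x}_0$ to $\boldsymbol{x}$, defining $\phi(t)=f\big(\boldsymbol{x}_0+t(\boldsymbol{x}-\boldsymbol{x}_0)\big)$ for $t\in[0,1]$. Because $C$ is convex and contains both endpoints, the whole segment lies in $C\subseteq S_k$, so $\phi$ is convex in $t$ and coincides with the restriction of $g_k$ along this segment. Differentiability of $f$ at $\boldsymbol{x}_0$, which the hypothesis $\nabla f(\boldsymbol{x}_0)=\boldsymbol{0}$ presupposes, then yields the right derivative $\phi'(0^+)=\nabla f(\boldsymbol{x}_0)^T(\boldsymbol{x}-\boldsymbol{x}_0)=0$.

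Finally, a convex function $\phi$ on $[0,1]$ has non-decreasing derivative, so $\phi'(t)\ge\phi'(0^+)=0$ for $t>0$ and hence $\phi$ is non-decreasing; equivalently, by the supporting-line inequality $\phi(1)\ge\phi(0)+\phi'(0^+)$. Either way $f(\boldsymbol{x})=\phi(1)\ge\phi(0)=f(\boldsymbol{x}_0)$. As $\boldsymbol{x}\in C$, $I\in\mathcal{I}$, and the piece $S_k$ were arbitrary, $\boldsymbol{x}_0$ is a partial minimum of $f$ on every piece containing it.

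The step I expect to be the main obstacle is the transfer carried out in the middle paragraph. The subtlety is that $\nabla f(\boldsymbol{x}_0)=\boldsymbol{0}$ is a condition on the global function, whereas convexity is only guaranteed for the active function $g_k$ on its own piece; when $\boldsymbol{x}_0$ lies on the boundary of $S_k$, the gradient of $g_k$ viewed as a function on $\mathbb{R}^n$ need not agree with $\nabla f(\boldsymbol{x}_0)$. The device that resolves this is to probe only along directions that remain inside the piece, which is exactly what the convex cross-section $C$ supplies: $f$ and $g_k$ agree along the entire test segment, so the vanishing directional derivative of $f$ is inherited by the convex function $\phi$, and no appeal to $\nabla g_k(\boldsymbol{x}_0)$ is needed.
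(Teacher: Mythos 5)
Your proposal is correct and follows essentially the same route as the paper's proof: fix a piece $S$ containing $\boldsymbol{x}_0$ and an index set $I\in\mathcal{I}$, observe that the cross-section is convex and that $f$ restricted to it is convex, and conclude from $\nabla f(\boldsymbol{x}_0)=\boldsymbol{0}$ that $\boldsymbol{x}_0$ minimizes $f$ there. The only difference is that the paper simply cites the standard first-order optimality condition for convex functions, whereas you prove it explicitly via the one-dimensional restriction $\phi(t)$ and also flag the boundary-point subtlety --- a worthwhile elaboration, but not a different argument.
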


\begin{proof}

Let $S$ be a piece of $f$ containing \textbf{$\boldsymbol{x}_{0}$},
let $I\in\mathcal{I}$ , and let $S_{I}(\boldsymbol{x}_{0})$ denote
the relevant cross-section of $S$. We know $f$ is convex on $S_{I}(\boldsymbol{x}_{0})$,
and since $\nabla f(\boldsymbol{x}_{0})=\boldsymbol{0}$, we have
that $\boldsymbol{x}_{0}$ minimizes $f$ on this convex set. Since
this holds for all $I\in\mathcal{I}$, $\boldsymbol{x}_{0}$ is a
partial minimum of $f$ on $S$.

\end{proof}

It is clear that multi-convexity provides a wealth of results concerning
partial minima, while piecewise multi-convexity restricts those results
to a subset of the domain. Less obvious is that partial minima of
smooth multi-convex functions need not be local minima. An example
was pointed out by a reviewer of this work, that the biconvex function
$f(x,y)=xy$ has a partial minimum at the origin which is not a local
minimum. However, the converse is easily verified, even in the absence
of differentiability.

\begin{theorem}

Let $\mathcal{I}$ be a collection of sets covering $\{1,2,...,n\}$,
let $f:\mathbb{R}^{n}\rightarrow\mathbb{R}$ be continuous piecewise
multi-convex with respect to $\mathcal{I}$, and let $\boldsymbol{x}_{0}$
be a local minimum on some piece $S$ of $f$. Then $\boldsymbol{x}_{0}$
is a partial minimum on $S$.

\end{theorem}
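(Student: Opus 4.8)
The plan is to reduce the statement to the elementary fact that a local minimum of a convex function over a convex set is in fact a global minimum over that set. First I would fix an index set $I\in\mathcal{I}$ and consider the cross-section $S_{I}(\boldsymbol{x}_{0})$. Because $f$ is continuous piecewise multi-convex with respect to $\mathcal{I}$ and $S$ is a piece, the set $S_{I}(\boldsymbol{x}_{0})$ is convex and the restriction of $f$ to it is a convex function, exactly as was used in the proof of Theorem \ref{theorem:multi_convex_partial_minimum}. Thus each cross-section through $\boldsymbol{x}_{0}$ furnishes a genuine convex minimization problem, and it suffices to show that $\boldsymbol{x}_{0}$ is globally optimal on each of them.

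The key observation is that local minimality on $S$ descends to local minimality on the smaller set $S_{I}(\boldsymbol{x}_{0})$. Since $\boldsymbol{x}_{0}$ is a local minimum of $f$ on $S$, there is a neighborhood $U$ of $\boldsymbol{x}_{0}$ with $f(\boldsymbol{x}_{0})\le f(\boldsymbol{x})$ for all $\boldsymbol{x}\in U\cap S$. As $S_{I}(\boldsymbol{x}_{0})\subseteq S$ by definition, the same inequality holds for all $\boldsymbol{x}\in U\cap S_{I}(\boldsymbol{x}_{0})$, so $\boldsymbol{x}_{0}$ is a local minimum of the convex restriction $f$ on $S_{I}(\boldsymbol{x}_{0})$.

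It then remains to invoke the standard convex argument. Suppose for contradiction that some $\boldsymbol{y}\in S_{I}(\boldsymbol{x}_{0})$ satisfies $f(\boldsymbol{y})<f(\boldsymbol{x}_{0})$. Convexity of $S_{I}(\boldsymbol{x}_{0})$ places the whole segment $(1-t)\boldsymbol{x}_{0}+t\boldsymbol{y}$ inside the cross-section for $t\in[0,1]$, and convexity of $f$ along it gives $f((1-t)\boldsymbol{x}_{0}+t\boldsymbol{y})\le(1-t)f(\boldsymbol{x}_{0})+tf(\boldsymbol{y})<f(\boldsymbol{x}_{0})$ for every $t\in(0,1]$. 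Choosing $t$ small enough that the segment point lies in $U$ contradicts local minimality, so no such $\boldsymbol{y}$ exists and $\boldsymbol{x}_{0}$ minimizes $f$ on $S_{I}(\boldsymbol{x}_{0})$. Taking the union over all $I\in\mathcal{I}$ shows $f(\boldsymbol{x}_{0})\le f(\boldsymbol{x})$ on $\cup_{I\in\mathcal{I}}S_{I}(\boldsymbol{x}_{0})$, which is precisely the definition of a partial minimum on $S$.

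I do not expect a serious obstacle here; the only point requiring care is the transfer of local minimality from $S$ to the cross-section, which hinges on the containment $S_{I}(\boldsymbol{x}_{0})\subseteq S$ and is what makes the hypothesis ``local minimum on the piece $S$'' (rather than on all of $\mathbb{R}^{n}$) sufficient. This is also why the converse fails: the reviewer's example $f(x,y)=xy$ has a partial minimum at the origin that is not a local minimum, since the segment argument only constrains behavior along the cross-sections themselves and says nothing about the remaining directions.
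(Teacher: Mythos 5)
Your proposal is correct and follows essentially the same route as the paper, which simply defers to the proof of Theorem \ref{theorem:multi_convex_partial_minimum}: restrict to each convex cross-section $S_{I}(\boldsymbol{x}_{0})\subseteq S$, observe that a local minimum of a convex function on a convex set is a global minimum there, and take the union over $I\in\mathcal{I}$. You have merely spelled out the segment argument and the transfer of local minimality from $S$ to the cross-section, both of which the paper leaves implicit.
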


\begin{proof}

The proof is essentially the same as that of theorem \ref{theorem:multi_convex_partial_minimum}.

\end{proof}

We have seen that for multi-convex functions there is a close relationship
between stationary points, local minima and partial minima. For these
functions, infinitesimal results concerning derivatives and local
minima can be extended to larger sets. However, we make no guarantees
about global minima. The good news is that, unlike global minima,
we shall see that we can easily solve for partial minima.

\section{Gradient descent\label{sec:Optimization-and-convergence}}

In the realm of non-convex optimization, also called global optimization,
methods can be divided into two groups: those which can certifiably
find a global minimum, and those which cannot. In the former group
we sacrifice speed, in the latter correctness. This work focuses on
algorithms of the latter kind, called local or sub-optimal methods,
as only this type is used in practice for deep neural networks. In
particular, the most common methods are variants of gradient descent,
where the gradient of the network with respect its parameters is computed
by a procedure called backpropagation. Since its explanation is often
obscured by jargon, we shall provide a simple summary here.

Backpropagation is nothing but the chain rule applied to the layers
of a network. Splitting the network into two functions $f=u\circ v$,
where $u:\mathbb{R}^{n}\rightarrow\mathbb{R}$, and $v:\mathbb{R}^{m}\rightarrow\mathbb{R}^{n}$,
we have
\[
\nabla f=\nabla u\mathcal{D}v
\]
where $\mathcal{D}$ denotes the Jacobian operator. Note that here
the parameters of $u$ are considered fixed, whereas the parameters
of $v$ are variable and the input data is fixed. Thus $\nabla f$
is the gradient of $f$ with respect to the parameters of $v$, if
it exists. The special observation is that we can proceed from the
top layer of the neural network $g_{N}$ to the bottom $g_{1}$, with
$u=g_{N}\circ g_{N-1}\circ...\circ g_{m+1}$, and $v=g_{m}$, each
time computing the gradient of $f$ with respect to the parameters
of $g_{m}$. In this way, we need only store the vector $\nabla u$
and the matrix $\mathcal{D}v$ can be forgotten at each step. This
is known as the ``backward pass,'' which allows for efficient computation
of the gradient of a neural network with respect to its parameters.
A similar algorithm computes the value of $g_{m-1}\circ g_{m-2}\circ...\circ g_{1}$
as a function of the input data, which is often needed to evaluate
$\mathcal{D}v$. First we compute and store $g_{1}$ as a function
of the input data, then $g_{2}\circ g_{1}$, and so on until we have
$f$. This is known as the ``forward pass.'' After one forward and
one backward pass, we have computed $\nabla f$ with respect to all
the network parameters.

Having computed $\nabla f$, we can update the parameters by gradient
descent, defined as follows. 

\begin{definition}

Let $S\subset\mathbb{R}^{n}$, and $f:S\rightarrow\mathbb{R}$ be
partial differentiable, with $\boldsymbol{x}_{0}\in S$. Then \textbf{gradient
descent} on $f$ is the sequence $\{\boldsymbol{x}_{k}\}_{k=0}^{\infty}$
defined by
\[
\boldsymbol{x}_{k+1}=\boldsymbol{x}_{k}-\alpha_{k}\nabla f(\boldsymbol{x}_{k})
\]
where $\alpha_{k}>0$ is called the \textbf{step size} or ``learning
rate.'' In this work we shall make the additional assumption that
$\sum_{k=0}^{\infty}a_{k}=\infty$.

\end{definition}

Variants of this basic procedure are preferred in practice because
their computational cost scales well with the number of network parameters.
There are many different ways to choose the step size, but our assumption
that $\sum_{k=0}^{\infty}a_{k}=\infty$ covers what is usually done
with deep neural networks. Note that we have not defined what happens
if $\boldsymbol{x}_{k}\notin S$. Since we are ultimately interested
in neural networks on $\mathbb{R}^{n}$, we can ignore this case and
say that the sequence diverges. Gradient descent is not guaranteed
to converge to a global minimum for all differentiable functions.
However, it is natural to ask to which points it can converge. This
brings us to a basic but important result.

\begin{theorem}\label{theorem:gradient_descent}

Let $f:\mathbb{R}^{n}\rightarrow\mathbb{R}$, and let $\{\boldsymbol{x}_{k}\}_{k=0}^{\infty}$
result from gradient descent on $f$ with $\lim_{k\rightarrow\infty}\boldsymbol{x}_{k}=\boldsymbol{x}^{*}$,
and $f$ continuously differentiable at $\boldsymbol{x}^{*}$. Then
$\nabla f(\boldsymbol{x}^{*})=\boldsymbol{0}$.

\end{theorem}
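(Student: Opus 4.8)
The plan is to argue by contradiction. Suppose $\nabla f(\boldsymbol{x}^{*})\neq\boldsymbol{0}$, and set $\boldsymbol{g}=\nabla f(\boldsymbol{x}^{*})$ with unit vector $\hat{\boldsymbol{g}}=\boldsymbol{g}/\|\boldsymbol{g}\|$. The guiding intuition is that if the limiting gradient is nonzero, then by continuity every sufficiently late iterate is pushed in very nearly the same direction $-\hat{\boldsymbol{g}}$, and since the step sizes sum to infinity the iterates must drift an unbounded distance in that direction, contradicting their convergence to $\boldsymbol{x}^{*}$. Recall that the standing assumption $\sum_{k=0}^{\infty}\alpha_{k}=\infty$ is exactly what drives this drift.

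First I would invoke continuity of $\nabla f$ at $\boldsymbol{x}^{*}$: since $\boldsymbol{x}_{k}\to\boldsymbol{x}^{*}$, we have $\nabla f(\boldsymbol{x}_{k})\to\boldsymbol{g}$, and hence $\langle\nabla f(\boldsymbol{x}_{k}),\hat{\boldsymbol{g}}\rangle\to\|\boldsymbol{g}\|$. So there is an $N$ with $\langle\nabla f(\boldsymbol{x}_{k}),\hat{\boldsymbol{g}}\rangle\geq\|\boldsymbol{g}\|/2$ for all $k\geq N$. Next I would project the displacement onto $\hat{\boldsymbol{g}}$. Summing the update rule $\boldsymbol{x}_{k+1}-\boldsymbol{x}_{k}=-\alpha_{k}\nabla f(\boldsymbol{x}_{k})$ from $N$ to $M-1$ and taking the inner product with $\hat{\boldsymbol{g}}$ gives
\[
\langle\boldsymbol{x}_{M}-\boldsymbol{x}_{N},\hat{\boldsymbol{g}}\rangle=-\sum_{k=N}^{M-1}\alpha_{k}\langle\nabla f(\boldsymbol{x}_{k}),\hat{\boldsymbol{g}}\rangle\leq-\frac{\|\boldsymbol{g}\|}{2}\sum_{k=N}^{M-1}\alpha_{k}.
\]
Because $\sum_{k=0}^{\infty}\alpha_{k}=\infty$, the right-hand side tends to $-\infty$ as $M\to\infty$, so $\langle\boldsymbol{x}_{M},\hat{\boldsymbol{g}}\rangle\to-\infty$. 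This is impossible, since $\boldsymbol{x}_{M}\to\boldsymbol{x}^{*}$ forces $\langle\boldsymbol{x}_{M},\hat{\boldsymbol{g}}\rangle\to\langle\boldsymbol{x}^{*},\hat{\boldsymbol{g}}\rangle$, a finite limit. The contradiction establishes $\nabla f(\boldsymbol{x}^{*})=\boldsymbol{0}$.

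The step I expect to need the most care is justifying why a persistent drift cannot be cancelled. A naive attempt to conclude directly from the convergence of $\sum_{k}\alpha_{k}\nabla f(\boldsymbol{x}_{k})$ (which follows from the telescoping identity $\sum_{k}(\boldsymbol{x}_{k+1}-\boldsymbol{x}_{k})=\boldsymbol{x}^{*}-\boldsymbol{x}_{0}$) fails, because the gradient directions could in principle rotate and produce cancellation in the vector sum. The device that resolves this is to project everything onto the fixed direction $\hat{\boldsymbol{g}}$: continuity pins the late gradients into the half-space $\{\boldsymbol{v}:\langle\boldsymbol{v},\hat{\boldsymbol{g}}\rangle>0\}$, so along $\hat{\boldsymbol{g}}$ no cancellation can occur, and the divergence of $\sum_{k}\alpha_{k}$ then does the rest. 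It is worth noting that this argument uses no structural assumption on $f$ beyond differentiability at the single limit point $\boldsymbol{x}^{*}$, so it applies verbatim to the piecewise multi-convex networks of the previous sections whenever $\boldsymbol{x}^{*}$ avoids the non-differentiable boundaries between pieces.
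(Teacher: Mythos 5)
Your proof is correct and follows essentially the same argument as the paper: fix a direction along which the limiting gradient has a nonzero component, use continuity of $\nabla f$ at $\boldsymbol{x}^{*}$ to keep the projected gradient bounded away from zero for all late iterates, and let $\sum_{k}\alpha_{k}=\infty$ force an unbounded drift that contradicts convergence. The only cosmetic difference is that you project onto $\hat{\boldsymbol{g}}=\nabla f(\boldsymbol{x}^{*})/\|\nabla f(\boldsymbol{x}^{*})\|$ while the paper projects onto a coordinate axis $j$ with $\partial f(\boldsymbol{x}^{*})/\partial(\boldsymbol{x})_{j}\neq 0$; your choice slightly streamlines the sign-consistency step.
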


\begin{proof}

First, we have
\[
\boldsymbol{x}^{*}=\boldsymbol{x}_{0}-\sum_{k=0}^{\infty}\alpha_{k}\nabla f(\boldsymbol{x}_{k}).
\]
Assume for the sake of contradiction that for the $j^{th}$ partial
derivative we have $|\partial f(\boldsymbol{x}^{*})/\partial(\boldsymbol{x})_{j}|>0$.
Now, pick some $\varepsilon$ such that $0<\varepsilon<|\partial f(\boldsymbol{x}^{*})/\partial(\boldsymbol{x})_{j}|$,
and by continuous differentiability, there is some $\delta>0$ such
that for all $\boldsymbol{x}$, $\|\boldsymbol{x}^{*}-\boldsymbol{x}\|_{2}<\delta$
implies $\|\nabla f(\boldsymbol{x}^{*})-\nabla f(\boldsymbol{x})\|_{2}<\varepsilon$.
Now, there must be some $K$ such that for all $k\ge K$ we have $\|\boldsymbol{x}^{*}-\boldsymbol{x}_{k}\|_{2}<\delta$,
so that $\partial f(\boldsymbol{x}_{k})/\partial(\boldsymbol{x})_{j}$
does not change sign. Then we can write 
\begin{align*}
\left|\sum_{k=K}^{\infty}\alpha_{k}\frac{\partial f(\boldsymbol{x}_{k})}{\partial\left(\boldsymbol{x}\right)_{j}}\right| & =\sum_{k=K}^{\infty}\alpha_{k}\left|\frac{\partial f(\boldsymbol{x}_{k})}{\partial\left(\boldsymbol{x}\right)_{j}}\right|\\
 & \ge\sum_{k=K}^{\infty}\alpha_{k}\left(\left|\frac{\partial f(\boldsymbol{x}^{*})}{\partial\left(\boldsymbol{x}\right)_{j}}\right|-\varepsilon\right)\\
 & =\infty.
\end{align*}
But this contradicts the fact that $\boldsymbol{x}_{k}$ converges.
Thus $\nabla f(\boldsymbol{x}^{*})=\boldsymbol{0}$.

\end{proof}

In the convex optimization literature, this simple result is sometimes
stated in connection with Zangwill's much more general convergence
theorem \cite{Zangwill:1969:Optimization,Iusem:2003:subgradientConvergence}.
Note, however, that unlike Zangwill we state necessary, rather than
sufficient conditions for convergence. While many similar results
are known, it is difficult to strictly weaken the conditions of theorem
\ref{theorem:gradient_descent}. For example, if we relax the condition
that $\alpha_{k}$ is not summable, and take $f(x)=x$, then $x_{k}$
will always converge to a non-stationary point. Similarly, if we relax
the constraint that $f$ is continuously differentiable, taking $f(x)=|x|$
and $a_{k}$ decreasing monotonically to zero, we will always converge
to the origin, which is not differentiable. Furthermore, if we have
$f(x)=|x|$ with $\alpha_{k}$ constant, then $x_{k}$ will not converge
for almost all $x_{0}$. It is possible to prove much stronger necessary
and sufficient conditions for gradient descent, but these results
require additional assumptions about the step size policy as well
as the function to be minimized, and possibly even the initialization
$\boldsymbol{x}_{0}$ \cite{Nesterov:2004:ConvexBook}.

It is worth discussing $f(x)=|x|$ in greater detail, since this is
a piecewise affine function and thus of interest in our investigation
of neural networks. While we have said its only convergence point
is not differentiable, it remains subdifferentiable, and convergence
results are known for subgradient descent \cite{Iusem:2003:subgradientConvergence}.
In this work we shall not make use of subgradients, instead considering
descent on a piecewise continuously differentiable function, where
the pieces are $x\le0$ and $x\ge0$. Although theorem \ref{theorem:gradient_descent}
does not apply to this function, the relevant results hold anyways.
That is, $x=0$ is minimal on some piece of $f$, a result which extends
to any continuous piecewise convex function, as any saddle point is
guaranteed to minimize some piece.

Here we should note one way in which this analysis fails in practice.
So far we have assumed the gradient $\nabla f$ is precisely known.
In practice, it is often prohibitively expensive to compute the average
gradient over large datasets. Instead we take random subsamples, in
a procedure known as \textit{stochastic }gradient descent. We will
not analyze its properties here, as current results on the topic impose
additional restrictions on the objective function and step size, or
require different definitions of convergence \cite{Bertsekas:2010:IncrementalGradient,Bach:2011:sgd,Ge:2015:sgdSaddle}.
Restricting ourselves to the true gradient $\nabla f$ allows us to
provide simple proofs applying to an extensive class of neural networks.

We are now ready to generalize these results to neural networks. There
is a slight ambiguity in that the boundary points between pieces need
not be differentiable, nor even sub-differentiable. Since we are interested
only in necessary conditions, we will say that gradient descent diverges
when $\nabla f(\boldsymbol{x}_{k})$ does not exist. However, our
next theorem can at least handle non-differentiable limit points.

\begin{theorem}\label{theorem:multi_convex_gradient_descent}

Let $\mathcal{I}=\{I_{1},I_{2},...,I_{m}\}$ be a collection of sets
covering $\{1,2,...,n\}$, let $f:\mathbb{R}^{n}\rightarrow\mathbb{R}$
be continuous piecewise multi-convex with respect to $\mathcal{I}$,
and piecewise continuously differentiable. Then, let $\{\boldsymbol{x}_{k}\}_{k=0}^{\infty}$
result from gradient descent on $f$ , with $\lim_{k\rightarrow\infty}\boldsymbol{x}_{k}=\boldsymbol{x}^{*}$,
such that either
\begin{enumerate}
\item $f$ is continuously differentiable at $\boldsymbol{x}^{*}$, or
\item there is some piece $S$ of $f$ and some $K>0$ such that $\boldsymbol{x}_{k}\in\interior S$
for all $k\ge K$.
\end{enumerate}
Then $\boldsymbol{x}^{*}$ is a partial minimum of $f$ on every piece
containing $\boldsymbol{x}^{*}$.

\end{theorem}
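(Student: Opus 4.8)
The plan is to reduce Theorem \ref{theorem:multi_convex_gradient_descent} to the combination of two results already established: Theorem \ref{theorem:gradient_descent} (which guarantees a vanishing gradient at a convergent limit point) and Theorem \ref{theorem:multi_convex_partial_minimum} (which converts a vanishing gradient into a partial minimum on each containing piece). The whole difficulty lies in handling the two cases separately, since case (1) allows direct appeal to Theorem \ref{theorem:gradient_descent}, whereas case (2) must work around the fact that $f$ may fail to be differentiable at $\boldsymbol{x}^{*}$.

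\begin{proof}

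First suppose we are in case (1), so that $f$ is continuously differentiable at $\boldsymbol{x}^{*}$. Then Theorem \ref{theorem:gradient_descent} applies directly and yields $\nabla f(\boldsymbol{x}^{*})=\boldsymbol{0}$. Since $f$ is continuous piecewise multi-convex with respect to $\mathcal{I}$, Theorem \ref{theorem:multi_convex_partial_minimum} then shows that $\boldsymbol{x}^{*}$ is a partial minimum of $f$ on every piece containing $\boldsymbol{x}^{*}$, as desired.

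Now suppose we are in case (2): there is a piece $S$ and an index $K>0$ such that $\boldsymbol{x}_{k}\in\interior S$ for all $k\ge K$. On $\interior S$ the function $f$ agrees with its active function $g$, which by the definition of piecewise continuous differentiability is continuously differentiable; in particular $\nabla f(\boldsymbol{x}_{k})=\nabla g(\boldsymbol{x}_{k})$ for all $k\ge K$. Discarding the first $K$ terms and applying the argument of Theorem \ref{theorem:gradient_descent} to the tail sequence on $\interior S$, we obtain $\nabla g(\boldsymbol{x}^{*})=\boldsymbol{0}$. Since $S$ is closed and $\boldsymbol{x}^{*}=\lim_{k}\boldsymbol{x}_{k}$, we have $\boldsymbol{x}^{*}\in S$, and by continuity of $\nabla g$ on $S$ the gradient of the active function vanishes at $\boldsymbol{x}^{*}$. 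As $g$ is multi-convex and agrees with $f$ on $S$, the argument of Theorem \ref{theorem:multi_convex_partial_minimum} shows that $\boldsymbol{x}^{*}$ minimizes $f$ on each cross-section $S_{I}(\boldsymbol{x}^{*})$ for $I\in\mathcal{I}$, so $\boldsymbol{x}^{*}$ is a partial minimum of $f$ on $S$. If $\boldsymbol{x}^{*}$ happens to lie in other pieces as well, continuity forces the active functions to agree at $\boldsymbol{x}^{*}$, and the same conclusion follows on each such piece.

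\end{proof}

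**The main obstacle** is case (2): the limit $\boldsymbol{x}^{*}$ may sit on the boundary of $S$ where $f$ is non-differentiable, so one cannot invoke Theorem \ref{theorem:gradient_descent} verbatim. The fix is to work with the active function $g$ on $\interior S$, where the tail of the sequence lives, establish that $\nabla g$ vanishes at the limit using the same non-summability argument, and then transfer this conclusion to $f$ via continuity and the fact that $S$ is closed. The delicate point is that the gradient used in the descent steps is genuinely $\nabla g$ once the iterates enter $\interior S$, which is precisely what lets the proof of Theorem \ref{theorem:gradient_descent} go through unchanged on the tail.
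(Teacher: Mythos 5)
Your argument coincides with the paper's in both cases: case (1) is the direct combination of Theorems \ref{theorem:gradient_descent} and \ref{theorem:multi_convex_partial_minimum}, and case (2) re-runs the same argument on the tail $\{\boldsymbol{x}_{k}\}_{k\ge K}$, viewed as a gradient descent sequence on the continuously differentiable active function $g$ of $S$, then transfers the conclusion to $f$ via $f|_{S}=g|_{S}$. Up to your final sentence the proposal is correct and essentially identical to the paper's proof.

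That final sentence of case (2), however, is a genuine error. Continuity of $f$ forces the active functions of two pieces meeting at $\boldsymbol{x}^{*}$ to agree in \emph{value} there, not in gradient, so $\nabla g(\boldsymbol{x}^{*})=\boldsymbol{0}$ says nothing about the active function of a neighboring piece $T$, and the partial-minimum property does not transfer to $T$. The paper's own example $f(x)=\min(x,x^{4})$ (figure \ref{fig:convergence}) is a counterexample: descending inside $(0,1)$ to $x^{*}=0$ satisfies condition (2) with $S=[0,1]$ and active function $x^{4}$, and $0$ is indeed a partial minimum on $[0,1]$; but $0$ also lies in the piece $(-\infty,0]$, whose active function is $x$, and there $0$ is a maximum, not a minimum. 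To be fair, under condition (2) the theorem's stated conclusion (``every piece containing $\boldsymbol{x}^{*}$'') is itself an overclaim: the paper's proof only establishes the partial minimum on the piece $S$, and the discussion following the theorem explicitly flags this caveat. The honest repair is to weaken the conclusion in case (2) to the single piece $S$, not to try to extend it to the other pieces by continuity.
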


\begin{proof}

If the first condition holds, the result follows directly from theorems
\ref{theorem:gradient_descent} and \ref{theorem:multi_convex_partial_minimum}.
If the second condition holds, then $\{\boldsymbol{x}_{k}\}_{k=K}^{\infty}$
is a convergent gradient descent sequence on $g$, the active function
of $f$ on $S$. Since $g$ is continuously differentiable on $\mathbb{R}^{n}$,
the first condition holds for $g$. Since $f|_{S}=g|_{S}$, $\boldsymbol{x}^{*}$
is a partial minimum of $f|_{S}$ as well.

\end{proof}

The first condition of theorem \ref{theorem:multi_convex_gradient_descent}
holds for every point in the interior of a piece, and some boundary
points. The second condition extends these results to non-differentiable
boundary points so long as gradient descent is eventually confined
to a single piece of the function\@. For example, consider the continuous
piecewise convex function $f(x)=\min(x,x^{4})$ as shown in figure
\ref{fig:convergence}. When we converge to $x=0$ from the piece
$[0,1]$, it is as if we were converging on the smooth function $g(x)=x^{4}$.
This example also illustrates an important caveat regarding boundary
points: although $x=0$ is an extremum of $f$ on $[0,1]$, it is
not an extremum on $\mathbb{R}$.

\begin{figure}
\begin{centering}
\includegraphics[scale=0.4]{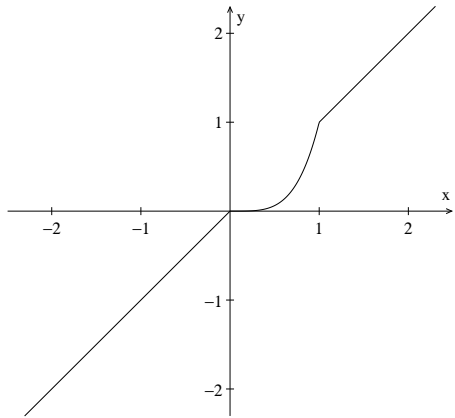}
\par\end{centering}
\centering{}\caption{Example of a piecewise convex function. The point $x=0$ minimizes
the function on the piece $[0,1]$.}
\label{fig:convergence}
\end{figure}

\section{Iterated convex optimization}

Although the previous section contained some powerful results, theorem
\ref{theorem:multi_convex_gradient_descent} suffers from two main
weaknesses, that it is a necessary condition and that it requires
extra care at non-differentiable points. It is difficult to overcome
these limitations with gradient descent. Instead, we shall define
a different optimization technique, from which necessary and sufficient
convergence results follow, regardless of differentiability.

Iterated convex optimization splits a non-convex optimization problem
into a number of convex sub-problems, solving the sub-problems in
each iteration. For a neural network, we have shown that the problem
of optimizing the parameters of a single layer, all others held constant,
is piecewise convex. Thus, restricting ourselves to a given piece
yields a convex optimization problem. In this section, we show that
these convex sub-problems can be solved repeatedly, converging to
a piecewise partial optimum.

\begin{definition}

Let $\mathcal{I}=\{I_{1},I_{2},...,I_{m}\}$ be a collection of sets
covering $\{1,2,...,n\}$, and let $S\subseteq\mathbb{R}^{n}$ and
$f:S\rightarrow\mathbb{R}$ be multi-convex with respect to $\mathcal{I}$.
Then \textbf{iterated convex optimization }is any sequence where $\boldsymbol{x}_{k}$
is a solution to the optimization problem
\begin{align}
\mbox{minimize } & f(\boldsymbol{y})\label{eq:iterated_convex}\\
\mbox{subject to } & \boldsymbol{y}\in\cup_{I\in\mathcal{I}}S_{I}(\boldsymbol{x}_{k-1})\nonumber 
\end{align}
with \textbf{$\boldsymbol{x}_{0}\in S$.}

\end{definition}

We call this iterated convex optimization because problem \ref{eq:iterated_convex}
can be divided into convex sub-problems
\begin{align}
\mbox{minimize } & f(\boldsymbol{y})\label{eq:convex_subproblem}\\
\mbox{subject to } & \boldsymbol{y}\in S_{I}(\boldsymbol{x}_{k-1}).\nonumber 
\end{align}
for each $I\in\mathcal{I}$. In this work, we assume the convex sub-problems
are solvable, without delving into specific solution techniques. Methods
for alternating between solvable sub-problems have been studied by
many authors, for many different types of sub-problems \cite{Wendell:1976:Bilinear}.
In the context of machine learning, the same results have been developed
for the special case of linear autoencoders \cite{Baldi:2012:ComplexValuedAutoencoders}.
Still, extra care must be taken in extending these results to arbitrary
index sets. The key is that $\boldsymbol{x}_{k}$ is not updated until
all sub-problems have been solved, so that each iteration consists
of solving $m$ convex sub-problems. This is equivalent to the usual
alternating convex optimization for biconvex functions, where $\mathcal{I}$
consists of two sets, but not for general multi-convex functions. 

Some basic convergence results follow immediately from the solvability
of problem \ref{eq:iterated_convex}. First, note that $\boldsymbol{x}_{k-1}$
is a feasible point, so we have $f(\boldsymbol{x}_{k})\le f(\boldsymbol{x}_{k-1})$.
This implies that $\lim_{k\rightarrow\infty}f(\boldsymbol{x}_{k})$
exists, so long as $f$ is bounded below. However, this does not imply
the existence of $\lim_{k\rightarrow\infty}\boldsymbol{x}_{k}$. See
Gorski et al.~for an example of a biconvex function on which $\boldsymbol{x}_{k}$
diverges \cite{Gorski:2007:BiConvex}. To prove stronger convergence
results, we introduce regularization to the objective. 

\begin{theorem}\label{theorem:regularization}

Let $\mathcal{I}$ be a collection of sets covering $\{1,2,...,n\}$,
and let $S\subseteq\mathbb{R}^{n}$ and $f:S\rightarrow\mathbb{R}$
be multi-convex with respect to $\mathcal{I}$. Next, let $\inf f>-\infty$,
and let $g(\boldsymbol{x})=f(\boldsymbol{x})+\lambda\|\boldsymbol{x}\|$,
where $\lambda>0$ and $\|\boldsymbol{x}\|$ is a convex norm. Finally,
let $\{\boldsymbol{x}_{k}\}_{k=0}^{\infty}$ result from iterated
convex optimization of $g$. Then $\boldsymbol{x}_{k}$ has at least
one convergent subsequence, in the topology induced by the metric
$d(\boldsymbol{x},\boldsymbol{y})=\|\boldsymbol{x}-\boldsymbol{y}\|$.

\end{theorem}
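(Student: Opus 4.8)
The plan is to show the sequence $\{\boldsymbol{x}_k\}$ is bounded in the norm $\|\cdot\|$ and then invoke compactness in the finite-dimensional space $\mathbb{R}^n$ to extract a convergent subsequence. The regularization term is precisely what furnishes the boundedness, which is why it cannot be dropped.

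First I would establish that the objective values $g(\boldsymbol{x}_k)$ form a non-increasing sequence. The key observation is that $\boldsymbol{x}_{k-1}$ is itself a feasible point for the problem defining $\boldsymbol{x}_k$: for every $I \in \mathcal{I}$ we have $\boldsymbol{x}_{k-1} \in S_I(\boldsymbol{x}_{k-1})$ directly from the definition of the cross-section, so $\boldsymbol{x}_{k-1} \in \cup_{I \in \mathcal{I}} S_I(\boldsymbol{x}_{k-1})$. Since $\boldsymbol{x}_k$ is a minimizer over this feasible set, $g(\boldsymbol{x}_k) \le g(\boldsymbol{x}_{k-1})$, and by induction $g(\boldsymbol{x}_k) \le g(\boldsymbol{x}_0)$ for all $k$.

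Next I would convert this uniform upper bound on $g$ into a bound on $\|\boldsymbol{x}_k\|$. Writing $m = \inf f > -\infty$, the lower bound $g(\boldsymbol{x}_k) = f(\boldsymbol{x}_k) + \lambda\|\boldsymbol{x}_k\| \ge m + \lambda\|\boldsymbol{x}_k\|$ combines with the previous step to give
\[
m + \lambda\|\boldsymbol{x}_k\| \le g(\boldsymbol{x}_k) \le g(\boldsymbol{x}_0),
\]
so that $\|\boldsymbol{x}_k\| \le (g(\boldsymbol{x}_0) - m)/\lambda$ for every $k$. Since $\lambda > 0$, this is a finite bound independent of $k$, and the sequence lies in a closed ball of the metric $d$.

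Finally, because $\mathbb{R}^n$ is finite-dimensional, all norms are equivalent and bounded sequences are relatively compact, so $\{\boldsymbol{x}_k\}$ admits a convergent subsequence in the topology induced by $d$, which is the desired conclusion. The hard part, such as it is, is recognizing that monotonicity of the objective together with coercivity of the regularized $g$ yields boundedness; the extraction of the subsequence is then immediate from Bolzano--Weierstrass. I would note that the theorem claims only a convergent subsequence, not convergence of the full sequence, which matches the cited divergent example of Gorski et al.\ and so requires no further argument.
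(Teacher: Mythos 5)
Your proposal is correct and follows essentially the same route as the paper: monotonicity of $g(\boldsymbol{x}_k)$ from feasibility of $\boldsymbol{x}_{k-1}$, the coercivity bound $\|\boldsymbol{x}_k\| \le (g(\boldsymbol{x}_0)-\inf f)/\lambda$, and Bolzano--Weierstrass on the resulting compact set. The only detail the paper adds is a remark that $g$ inherits multi-convexity from $f$ (so the iteration is well defined), which you may want to note in passing.
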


\begin{proof}

From lemma \ref{lemma:sum_piecewise_convex}, $g$ is multi-convex,
so we are allowed iterated convex optimization. Now, if $\inf f+\lambda\|\boldsymbol{x}\|>g(\boldsymbol{x}_{0})$
we have that $g(\boldsymbol{x})>g(\boldsymbol{x}_{0})$. Thus $g(\boldsymbol{x})>g(\boldsymbol{x}_{0})$
whenever $\|\boldsymbol{x}\|>\left(g(\boldsymbol{x}_{0})-\inf f\right)/\lambda$.
Since $g(\boldsymbol{x}_{k})$ is a non-increasing sequence, we have
that $\|\boldsymbol{x}_{k}\|\le\left(g(\boldsymbol{x}_{0})-\inf f\right)/\lambda$.
Equivalently, $\boldsymbol{x}_{k}$ lies in the set $A=\{\boldsymbol{x}:\|\boldsymbol{x}\|\le\left(g(\boldsymbol{x}_{0})-\inf f\right)/\lambda\}$.
Since $\|\boldsymbol{x}\|$ is continuous, $A$ is closed and bounded,
and thus it is compact. Then, by the Bolzano-Weierstrauss theorem,
$\boldsymbol{x}_{k}$ has at least one convergent subsequence \cite{Johnsonbaugh:1970:RealAnalysis}.

\end{proof}

In theorem \ref{theorem:regularization}, the function $g$ is called
the \textbf{regularized} version of $f$. In practice, regularization
often makes a non-convex optimization problem easier to solve, and
can reduce over-fitting. The theorem shows that iterated convex optimization
on a regularized function always has at least one convergent subsequence.
Next, we shall establish some rather strong properties of the limits
of these subsequences.

\begin{theorem}\label{theorem:sequential_convex_optimization_convergence}

Let $\mathcal{I}$ be a collection of sets covering $\{1,2,...,n\}$,
and let $S\subseteq\mathbb{R}^{n}$ and $f:S\rightarrow\mathbb{R}$
be multi-convex with respect to $\mathcal{I}$. Next, let $\{\boldsymbol{x}_{k}\}_{k=0}^{\infty}$
result from iterated convex optimization of $f$. Then the limit of
every convergent subsequence is a partial minimum on $\interior S$
with respect to $\mathcal{I}$, in the topology induced by the metric
$d(\boldsymbol{x},\boldsymbol{y})=\|\boldsymbol{x}-\boldsymbol{y}\|$
for some norm $\|\boldsymbol{x}\|$. Furthermore, if $\{\boldsymbol{x}_{m_{k}}\}_{k=1}^{\infty}$
and $\{\boldsymbol{x}_{n_{k}}\}_{k=1}^{\infty}$ are convergent subsequences,
then $\lim_{k\rightarrow\infty}f(\boldsymbol{x}_{m_{k}})=\lim_{k\rightarrow\infty}f(\boldsymbol{x}_{n_{k}})$.

\end{theorem}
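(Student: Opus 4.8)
The plan is to exploit the monotonicity of the objective along the iterates together with a simple construction of feasible comparison points on each cross-section. First I would record the fact, already noted before the theorem, that $\boldsymbol{x}_{k-1}$ is feasible for the $k$-th problem (it lies in every $S_{I}(\boldsymbol{x}_{k-1})$), so that $f(\boldsymbol{x}_{k})\le f(\boldsymbol{x}_{k-1})$ and hence $\{f(\boldsymbol{x}_{k})\}$ is non-increasing. A non-increasing real sequence converges in $[-\infty,\infty)$, and every subsequence converges to the same limit. This immediately settles the second claim: if $\{\boldsymbol{x}_{m_{k}}\}$ and $\{\boldsymbol{x}_{n_{k}}\}$ are convergent subsequences, then $\{f(\boldsymbol{x}_{m_{k}})\}$ and $\{f(\boldsymbol{x}_{n_{k}})\}$ are subsequences of one and the same monotone sequence, so $\lim_{k}f(\boldsymbol{x}_{m_{k}})=\lim_{k}f(\boldsymbol{x}_{n_{k}})=\lim_{k}f(\boldsymbol{x}_{k})$.

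For the first claim, let $\boldsymbol{x}^{*}=\lim_{k}\boldsymbol{x}_{m_{k}}\in\interior S$ and write $f^{*}=\lim_{k}f(\boldsymbol{x}_{k})$; using continuity of $f$ at $\boldsymbol{x}^{*}$ together with $\boldsymbol{x}_{m_{k}}\to\boldsymbol{x}^{*}$ we get $f^{*}=f(\boldsymbol{x}^{*})$. Fix $I\in\mathcal{I}$ and an arbitrary $\boldsymbol{y}\in S_{I}(\boldsymbol{x}^{*})\cap\interior S$; the goal is $f(\boldsymbol{x}^{*})\le f(\boldsymbol{y})$. I would build an approximating sequence $\boldsymbol{y}_{k}$ lying on the cross-section through $\boldsymbol{x}_{m_{k}}$ by setting $(\boldsymbol{y}_{k})_{j}=(\boldsymbol{y})_{j}$ for $j\in I$ and $(\boldsymbol{y}_{k})_{j}=(\boldsymbol{x}_{m_{k}})_{j}$ for $j\notin I$. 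Since $\boldsymbol{y}$ agrees with $\boldsymbol{x}^{*}$ off $I$ and $\boldsymbol{x}_{m_{k}}\to\boldsymbol{x}^{*}$, we get $\boldsymbol{y}_{k}\to\boldsymbol{y}$. Because $\boldsymbol{y}\in\interior S$, a fixed ball around $\boldsymbol{y}$ lies in $S$, so $\boldsymbol{y}_{k}\in S$ for all large $k$; as $\boldsymbol{y}_{k}$ also agrees with $\boldsymbol{x}_{m_{k}}$ off $I$, we have $\boldsymbol{y}_{k}\in S_{I}(\boldsymbol{x}_{m_{k}})$, a feasible point of the $(m_{k}+1)$-th problem. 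Optimality of $\boldsymbol{x}_{m_{k}+1}$ then gives $f(\boldsymbol{x}_{m_{k}+1})\le f(\boldsymbol{y}_{k})$. Letting $k\to\infty$ and using that $\{f(\boldsymbol{x}_{m_{k}+1})\}$ is a subsequence of the monotone sequence (hence tends to $f^{*}=f(\boldsymbol{x}^{*})$) on the left, and continuity of $f$ at $\boldsymbol{y}$ on the right, yields $f(\boldsymbol{x}^{*})\le f(\boldsymbol{y})$. As $I$ and $\boldsymbol{y}$ were arbitrary, $\boldsymbol{x}^{*}$ is a partial minimum on $\interior S$ with respect to $\mathcal{I}$.

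The main obstacle is exactly the feasibility of the comparison points $\boldsymbol{y}_{k}$: they are forced to agree with $\boldsymbol{x}_{m_{k}}$ off $I$, and nothing guarantees such a point stays in $S$ unless the target $\boldsymbol{y}$ has room around it. This is precisely why the conclusion is restricted to $\interior S$ — interiority of $\boldsymbol{y}$ supplies the ball that captures $\boldsymbol{y}_{k}$ for large $k$, whereas a boundary target could fail to be approachable from the cross-sections through $\boldsymbol{x}_{m_{k}}$. A secondary, more routine point is the use of continuity of $f$ to pass limits through $f$ on both sides; this holds for the continuous (piecewise) multi-convex objectives motivating the theorem and should be invoked explicitly. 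Finally, it is worth emphasizing that only the single iterate $\boldsymbol{x}_{m_{k}+1}$ immediately following each $\boldsymbol{x}_{m_{k}}$ is used, so the argument never needs convergence of the full sequence, only of the chosen subsequence guaranteed by Theorem \ref{theorem:regularization} in the regularized setting.
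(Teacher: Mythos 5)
Your proposal is correct and uses the same essential mechanism as the paper's proof: interiority of the comparison point $\boldsymbol{y}$ supplies a ball that makes the spliced point $\boldsymbol{y}_{k}\in S_{I}(\boldsymbol{x}_{m_{k}})$ feasible, and optimality of $\boldsymbol{x}_{m_{k}+1}$ together with monotonicity of $f(\boldsymbol{x}_{k})$ forces $f(\boldsymbol{x}^{*})\le f(\boldsymbol{y})$. The paper phrases this as a contradiction with a translated point $\tilde{\boldsymbol{x}}=\boldsymbol{x}_{n_{K}}+\boldsymbol{x}^{\prime}-\boldsymbol{x}^{*}$ and an explicit $\varepsilon$--$\delta$ argument, and proves the equality of subsequential objective values by a separate contradiction, but these are only cosmetic differences from your direct limiting argument; your explicit flagging of the continuity assumption on $f$ is a point the paper also uses implicitly.
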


\begin{proof}

Let $\boldsymbol{x}_{n_{k}}$ denote a subsequence of $\boldsymbol{x}_{k}$
with $\boldsymbol{x}^{*}=\lim_{n\rightarrow\infty}\boldsymbol{x}_{n_{k}}$.
Now, assume for the sake of contradiction that $\boldsymbol{x}^{*}$
is not a partial minimum on $\mbox{int}S$ with respect to $\mathcal{I}$.
Then there is some $I\in\mathcal{I}$ and some $\boldsymbol{x}^{\prime}\in S_{I}(\boldsymbol{x}^{*})$
with $\boldsymbol{x}^{\prime}\in\interior S$ such that $f(\boldsymbol{x}^{\prime})<f(\boldsymbol{x}^{*})$.
Now, $f$ is continuous at $\boldsymbol{x}^{\prime}$, so there must
be some $\delta>0$ such that for all $\boldsymbol{x}\in S$, $\|\boldsymbol{x}-\boldsymbol{x}^{\prime}\|<\delta$
implies $|f(\boldsymbol{x})-f(\boldsymbol{x}^{\prime})|<f(\boldsymbol{x}^{*})-f(\boldsymbol{x}^{\prime})$.
Furthermore, since $\boldsymbol{x}^{\prime}$ is an interior point,
there must be some open ball $B\subset S$ of radius $r$ centered
at $\boldsymbol{x}^{\prime}$, as shown in figure \ref{fig:proof_iterated_convex_convergence}.
Now, there must be some $K$ such that $\|\boldsymbol{x}_{n_{K}}-\boldsymbol{x}^{*}\|<\min(\delta,r)$.
Then, let $\tilde{\boldsymbol{x}}=\boldsymbol{x}_{n_{K}}+\boldsymbol{x}^{\prime}-\boldsymbol{x}^{*}$,
and since $\|\tilde{\boldsymbol{x}}-\boldsymbol{x}^{\prime}\|<r$,
we know that $\tilde{\boldsymbol{x}}\in B$, and thus $\tilde{\boldsymbol{x}}\in S_{I}(\boldsymbol{x}_{n_{K}})$.
Finally, $\|\tilde{\boldsymbol{x}}-\boldsymbol{x}^{\prime}\|<\delta$,
so we have $f(\tilde{\boldsymbol{x}})<f(\boldsymbol{x}^{*})\le f(\boldsymbol{x}_{n_{K}+1})$,
which contradicts the fact that $\boldsymbol{x}_{n_{K}+1}$ minimizes
$g$ over a set containing $\tilde{\boldsymbol{x}}$. Thus $\boldsymbol{x}^{*}$
is a partial minimum on $\interior S$ with respect to $\mathcal{I}$.

Finally, let $\{\boldsymbol{x}_{m_{k}}\}_{k=1}^{\infty}$ and $\{\boldsymbol{x}_{n_{k}}\}_{k=1}^{\infty}$
be two convergent subsequences of $\boldsymbol{x}_{k}$, with $\lim_{k\rightarrow\infty}\{\boldsymbol{x}_{m_{k}}\}=\boldsymbol{x}_{m}^{*}$
and $\lim_{k\rightarrow\infty}\{\boldsymbol{x}_{n_{k}}\}=\boldsymbol{x}_{n}^{*}$,
and assume for the sake of contradiction that $f(\boldsymbol{x}_{m}^{*})>f(\boldsymbol{x}_{n}^{*})$.
Then by continuity, there is some $K$ such that $f(\boldsymbol{x}_{n_{K}})<f(\boldsymbol{x}_{m}^{*})$.
But this contradicts the fact that $f(\boldsymbol{x}_{k})$ is non-increasing.
Thus $f(\boldsymbol{x}_{m}^{*})=f(\boldsymbol{x}_{n}^{*})$.

\end{proof}

\begin{figure}
\begin{centering}
\includegraphics[scale=0.5]{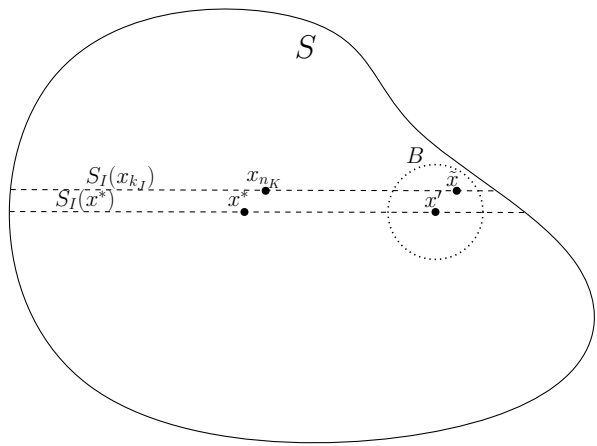}
\par\end{centering}
\caption{Illustration of the proof of theorem \ref{theorem:sequential_convex_optimization_convergence}.
Note the cross-sections of the biconvex set $S$.}
\label{fig:proof_iterated_convex_convergence}
\end{figure}

The previous theorem is an extension of results reviewed in Gorski
et al.~to arbitrary index sets \cite{Gorski:2007:BiConvex}. While
Gorski et al.~explicitly constrain the domain to a compact biconvex
set, we show that regularization guarantees $\boldsymbol{x}_{k}$
cannot escape a certain compact set, establishing the necessary condition
for convergence. Furthermore, our results hold for general multi-convex
sets, while the earlier result is restricted to Cartesian products
of compact sets.

These results for iterated convex optimization are considerably stronger
than what we have shown for gradient descent. While any bounded sequence
in $\mathbb{R}^{n}$ has a convergent subsequence, and we can guarantee
boundedness for some variants of gradient descent, we cannot normally
say much about the limits of subsequences. For iterated convex optimization,
we have shown that the limit of any subsequence is a partial minimum,
and all limits of subsequences are equal in objective value. For all
practical purposes, this is just as good as saying that the original
sequence converges to partial minimum.

\section{Global optimization\label{sec:Local-minima}}

Although we have provided necessary and sufficient conditions for
convergence of various optimization algorithms on neural networks,
the points of convergence need only minimize cross-sections of pieces
of the domain. Of course we would prefer results relating the points
of convergence to global minima of the training objective. In this
section we illustrate the difficulty of establishing such results,
even for the simplest of neural networks.

In recent years much work has been devoted to providing theoretical
explanations for the empirical success of deep neural networks, a
full accounting of which is beyond the scope of this article. In order
to simplify the problem, many authors have studied \textit{linear}
neural networks, in which the layers have the form $g(\boldsymbol{x})=A\boldsymbol{x}$,
where $A$ is the parameter matrix. With multiple layers this is clearly
a linear function of the output, but not of the parameters. As a special
case of piecewise affine functions, our previous results suffice to
show that these networks are multi-convex as functions of their parameters.
This was proven for the special case of linear autoencoders by Baldi
and Lu \cite{Baldi:2012:ComplexValuedAutoencoders}.

Many authors have claimed that linear neural networks contain no ``bad''
local minima, i.e.~every local minimum is a global minimum \cite{Kawaguchi:2016:WithoutPoorLocalMinima,Soudry:2016:NoBadLocalMinima}.
This is especially evident in the study of linear autoencoders, which
were shown to admit many points of inflection, but only a single strict
minimum \cite{Baldi:2012:ComplexValuedAutoencoders}. While powerful,
this claim does not apply to the networks seen in practice. To see
this, consider the dataset $D=\{(0,1/2),(-1,\alpha),(1,2\alpha)\}$
consisting of three $(x,y)$ pairs, parameterized by $\alpha>1$.
Note that the dataset has zero mean and unit variance in the $x$
variable, which is common practice in machine learning. However, we
do not take zero mean in the $y$ variable, as the model we shall
adopt is non-negative. 

Next, consider the simple neural network
\begin{align}
f(a,b) & =\sum_{(x,y)\in D}\left(y-\left[ax+b\right]_{+}\right)^{2}\label{eq:single_layer_objective}\\
 & =\left(\frac{1}{2}-\left[b\right]_{+}\right)^{2}+\left(\alpha-\left[b-a\right]_{+}\right)^{2}+\left(2\alpha-\left[b+a\right]_{+}\right)^{2}.\nonumber 
\end{align}
This is the squared error of a single ReLU neuron, parameterized by
$(a,b)\in\mathbb{R}^{2}$. We have chosen this simplest of all networks
because we can solve for the local minima in closed form, and show
they are indeed very bad. First, note that $f$ is a continuous piecewise
convex function of six pieces, realized by dividing the plane along
the line $ax+b=0$ for each $x\in D$, as shown in figure \ref{fig:bad_local_minimum}.
Now, for all but one of the pieces, the ReLU is ``dead'' for at
least one of the three data points, i.e.~$ax+b<0$. On these pieces,
at least one of the three terms of equation \ref{eq:single_layer_objective}
is constant. The remaining terms are minimized when $y=ax+b$, represented
by the three dashed lines in figure \ref{fig:bad_local_minimum}.
There are exactly three points where two of these lines intersect,
and we can easily show that two of them are strict local minima. Specifically,
the point $(a_{1},b_{1})=(1/2-\alpha,1/2)$ minimizes the first two
terms of equation \ref{eq:single_layer_objective}, while $(a_{2},b_{2})=(2\alpha-1/2,1/2)$
minimizes the first and last term. In each case, the remaining term
is constant over the piece containing the point of intersection. Thus
these points are strict global minima on their respective pieces,
and strict local minima on $\mathbb{R}^{2}$. Furthermore, we can
compute $f(a_{1},b_{1})=4\alpha^{2}$ and $f(a_{2},b_{2})=\alpha^{2}$.
This gives
\begin{align*}
\lim_{\alpha\rightarrow\infty}a_{1} & =-\infty,\\
\lim_{\alpha\rightarrow\infty}a_{2} & =+\infty,
\end{align*}
and
\[
\lim_{\alpha\rightarrow\infty}\left(f(a_{1},b_{1})-f(a_{2},b_{2})\right)=\infty.
\]
Now, it might be objected that we are not permitted to take $\alpha\rightarrow\infty$
if we require that the $y$ variable has unit variance. However, these
same limits can be achieved with variance tending to unity by adding
$\left\lfloor \alpha\right\rfloor $ instances of the point $(1,2\alpha)$
to our dataset. Thus even under fairly stringent requirements we can
construct a dataset yielding arbitrarily bad local minima, both in
the parameter space and the objective value. This provides some weak
justification for the empirical observation that success in deep learning
depends greatly on the data at hand.

We have shown that the results concerning local minima in linear networks
do not extend to the nonlinear case. Ultimately this should not be
a surprise, as with linear networks the problem can be relaxed to
linear regression on a convex objective. That is, the composition
of all linear layers $g(\boldsymbol{x})=A_{1}A_{2}...A_{n}\boldsymbol{x}$
is equivalent to the function $f(\boldsymbol{x})=A\boldsymbol{x}$
for some matrix $A$, and under our previous assumptions the problem
of finding the optimal $A$ is convex. Furthermore, it is easily shown
that the number of parameters in the relaxed problem is polynomial
in the number of original parameters. Since the relaxed problem fits
the data at least as well as the original, it is not surprising that
the original problem is computationally tractable. 
\begin{figure}
\begin{centering}
\includegraphics[scale=0.5]{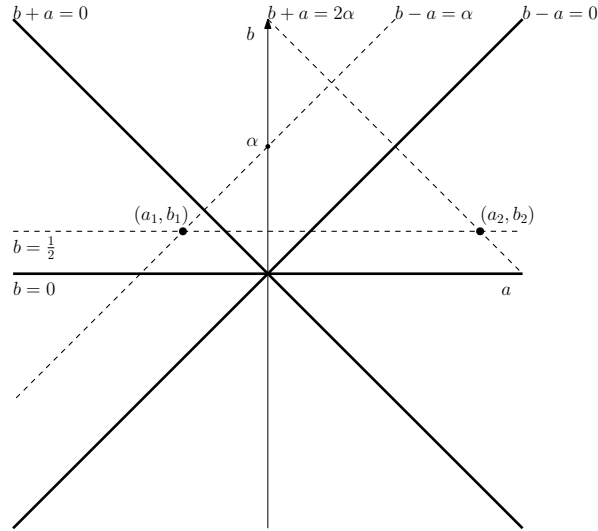}
\par\end{centering}
\caption{Parameter space of the neural network from equation \ref{eq:single_layer_objective},
with pieces divided by the bold black lines. The points $(a_{1},b_{1})$
and $(a_{2},b_{2})$ are local minima, which can be made arbitrarily
far apart by varying the dataset.}
\label{fig:bad_local_minimum}
\end{figure}

This simple example was merely meant to illustrate the difficulty
of establishing results for \textit{every} local minimum of \textit{every}
neural network. Since training a certain kind of network is known
to be NP-Complete, it is difficult to give any guarantees about worst-case
global behavior \cite{Blum:1992:OCT:148433.148441}. We have made
no claims, however, about probabilistic behavior on the average practical
dataset, nor have we ruled out the effects of more specialized networks,
such as very deep ones.

\section{Conclusion}

We showed that a common class of neural networks is piecewise convex
in each layer, with all other parameters fixed. We extended this to
a theory of a piecewise multi-convex functions, showing that the training
objective function can be represented by a finite number of multi-convex
functions, each active on a multi-convex parameter set. From here
we derived various results concerning the extrema and stationary points
of piecewise multi-convex functions. We established convergence conditions
for both gradient descent and iterated convex optimization on this
class of functions, showing they converge to piecewise partial minima.
Similar results are likely to hold for a variety of other optimization
algorithms, especially those guaranteed to converge at stationary
points or local minima. 

We have witnessed the utility of multi-convexity in proving convergence
results for various optimization algorithms. However, this property
may be of practical use as well. Better understanding of the training
objective could lead to the development of faster or more reliable
optimization methods, heuristic or otherwise. These results may provide
some insight into the practical success of sub-optimal algorithms
on neural networks. However, we have also seen that local optimality
results do not extend to global optimality as they do for linear autoencoders.
Clearly there is much left to discover about how, or even if we can
optimize deep, nonlinear neural networks.

\section*{Acknowledgments}

The author would like to thank Mihir Mongia for his helpful comments
in preparing this manuscript.

\section*{Funding}

This research did not receive any specific grant from funding agencies
in the public, commercial, or not-for-profit sectors.

\bibliographystyle{elsarticle-num}
\bibliography{nn2016}

\end{document}